\documentclass[letterpaper, 10 pt, conference]{ieeeconf}
\usepackage{amsmath,amsfonts}
\usepackage{algorithm}
\usepackage{array}
\usepackage{textcomp}
\usepackage{stfloats}
\usepackage{url}
\usepackage{verbatim}
\usepackage{graphicx}
\usepackage{cite}
\usepackage{relsize}
\usepackage[noend]{algpseudocode}
\usepackage{etoolbox}

\newbool{useappendix}
\booltrue{useappendix}

\usepackage{amsthm}
\usepackage{amssymb}
\usepackage{mathtools}
\usepackage{color}
\usepackage{cancel} 
\usepackage{hyperref}
\mathtoolsset{showonlyrefs}

\theoremstyle{definition}

\newtheorem{assumption}{\protect\assumptionname}
\newtheorem{lemma}{\protect\lemmaname}
\newtheorem{proposition}{\protect\propositionname}
\newtheorem{example}{Example}

\theoremstyle{plain}
\newtheorem{theorem}{\protect\theoremname}
\newtheorem{problem}{Problem}

\providecommand{\assumptionname}{Assumption}
\providecommand{\definitionname}{Definition}
\providecommand{\lemmaname}{Lemma}
\providecommand{\theoremname}{Theorem}
\providecommand{\propositionname}{Proposition}

\newcommand{\accentest}[1]{\hat{#1}}
\newcommand{\accentestprior}[1]{\overline{#1}}
\newcommand{\accentapx}[1]{\breve{#1}}
\newcommand{\accentapxprior}[1]{\widetilde{#1}}
\newcommand{\x}[1]{\mathbf{x}_{#1}}

\newcommand{\xest}[1]{\accentest{\mathbf{x}}_{#1}}
\newcommand{\xestb}[2]{\vectorBlock{\xest{#1}}{#2}}
\newcommand{\xestbo}[3]{\vectorBlockOwned{\xest{#1}}{#2}{#3}}

\newcommand{\xesto}[2]{\vectorOwned{\xest{#1}}{#2}}
\newcommand{\xestoi}[3]{\vectorOwnedIteration{\xest{#1}}{#2}{#3}}
\newcommand{\xestboi}[4]{\vectorBlockOwnedIteration{\xest{#1}}{#2}{#3}{#4}}
\newcommand{\xestprior}[1]{\accentestprior{\mathbf{x}}_{#1}}

\newcommand{\xestpriorbo}[3]{\vectorBlockOwned{\xestprior{#1}}{#2}{#3}}
\newcommand{\xestprioro}[2]{\vectorOwned{\xestprior{#1}}{#2}}
\newcommand{\xapx}[1]{\accentapx{\mathbf{x}}_{#1}}
\newcommand{\xapxb}[2]{\vectorBlock{\xapx{#1}}{#2}}
\newcommand{\xapxbo}[3]{\vectorBlockOwned{\xapx{#1}}{#2}{#3}}

\newcommand{\xapxoi}[3]{\vectorOwnedIteration{\xapx{#1}}{#2}{#3}}
\newcommand{\xapxboi}[4]{\vectorBlockOwnedIteration{\xapx{#1}}{#2}{#3}{#4}}
\newcommand{\xapxprior}[1]{\accentapxprior{\mathbf{x}}_{#1}}

\newcommand{\xapxpriorbo}[3]{\vectorBlockOwned{\xapxprior{#1}}{#2}{#3}}
\newcommand{\xapxprioro}[2]{\vectorOwned{\xapxprior{#1}}{#2}}

\newcommand{\xtotal}[1]{\mathbf{x}_{#1}}

\newcommand{\xtotali}[2]{\vectorIteration{\xtotal{#1}}{#2}}

\newcommand{\y}[1]{\mathbf{y}_{#1}}

\newcommand{\ybo}[3]{\vectorBlockOwned{\y{#1}}{#2}{#3}}
\newcommand{\yo}[2]{\vectorOwned{\y{#1}}{#2}}

\newcommand{\yi}[2]{\y{#2,#1}} 
\newcommand{\z}[1]{\mathbf{z}_{#1}}
\newcommand{\zapx}[1]{\accentapx{\mathbf{z}}_{#1}}
\newcommand{\procNoise}[1]{\mathbf{w}_{#1}}
\newcommand{\measNoise}[1]{\mathbf{v}_{#1}}
\newcommand{\measNoisei}[2]{\measNoise{#1}^{#2}}
\newcommand{\xUpdate}[3]{\vectorOwnedIteration{\theta_{#1}}{#2}{#3}}

\newcommand{\mapvar}[1]{\mathbf{X}_{#1}}

\newcommand{\errorSubopt}[2]{\mu_{#1}(#2)}
\newcommand{\cerrorSubopt}[1]{\mu_{#1}} 

\newcommand{\Pest}[1]{\accentest{\mathbf{P}}_{#1}}

\newcommand{\Pestprior}[1]{\accentestprior{\mathbf{P}}_{#1}}

\newcommand{\Papx}[1]{\accentapx{\mathbf{P}}_{#1}}
\newcommand{\Q}[1]{\mathbf{Q}_{#1}}
\newcommand{\Qinv}[1]{\mathbf{Q}_{#1}^{-1}}
\newcommand{\Rbf}[1]{\mathbf{R}_{#1}}

\newcommand{\Rinv}[1]{\mathbf{R}_{#1}^{-1}}
\newcommand{\infmat}{\mathbf{\Omega}}
\newcommand{\infmatest}[1]{\accentest{\infmat}_{#1}}
\newcommand{\infmatapx}[1]{\accentapx{\infmat}_{#1}}
\newcommand{\infmatestprior}[1]{\accentestprior{\infmat}_{#1}}
\newcommand{\infmatapxprior}[1]{\accentapxprior{\infmat}_{#1}}
\newcommand{\infmatA}[1]{\alpha_{#1}}
\newcommand{\infmatB}[1]{\beta_{#1}}
\newcommand{\infmatC}[1]{\gamma_{#1}}
\newcommand{\infmatD}[1]{\delta_{#1}}
\newcommand{\infmatcom}[1]{\Delta_{#1}}

\newcommand{\infmatapxcom}[1]{\Phi_{#1}}

\newcommand{\A}[1]{\mathbf{A}_{#1}}
\newcommand{\C}[1]{\mathbf{C}_{#1}}
\newcommand{\Ci}[2]{\C{#2,#1}}
\newcommand{\K}[1]{\mathbf{K}_{#1}}
\newcommand{\Kapx}[1]{\accentapx{\mathbf{K}}_{#1}}
\newcommand{\f}[1]{\mathbf{f}_{#1}}
\newcommand{\fapx}[1]{\accentapx{\mathbf{f}}_{#1}}
\newcommand{\capx}[1]{\accentapx{c}_{#1}}
\newcommand{\F}[1]{\mathbf{F}_{#1}}
\newcommand{\G}[1]{\mathbf{G}_{#1}}
\newcommand{\Pibf}[1]{\mathbf{\Pi}_{#1}}
\newcommand{\Hbf}[1]{\mathbf{H}_{#1}}
\newcommand{\W}[1]{\mathbf{W}_{#1}}

\newcommand{\Wapx}[1]{\accentapx{\mathbf{W}}_{#1}}
\newcommand{\Wapxinv}[1]{\accentapx{\mathbf{W}}_{#1}^{-1}}
\newcommand{\J}{J}
\newcommand{\Japx}{\accentapx{J}}
\newcommand{\coeffmaty}[1]{\mathbf{N}_{#1}}
\newcommand{\coeffmatxestprior}[1]{\mathbf{S}_{#1}}
\newcommand{\coeffmatxapxprior}[1]{\accentapx{\mathbf{S}}_{#1}}
\newcommand{\priorpropmat}[1]{\mathbf{L}_{#1}}
\newcommand{\U}[1]{\mathbf{U}_{#1}}
\newcommand{\infsub}[1]{\mathbf{V}_{#1}}
\newcommand{\infsubpos}[2]{\mathbf{Z}_{#1,#2}}
\newcommand{\infpriorpos}[1]{\mathbf{Y}_{#1}}

\newcommand{\norm}[1]{\left\lVert #1 \right\rVert}
\newcommand{\zero}[1]{\mathbf{0}^{#1}}
\newcommand{\eye}[1]{\mathbf{I}^{#1}}
\newcommand{\gaussian}[2]{\mathcal{N}\left(#1,#2\right)}
\newcommand{\gradb}[1]{\nabla_{#1}}
\newcommand{\E}[1]{\mathbb{E}\left[#1\right]}
\newcommand{\superth}[1]{#1^{\text{th}}}

\newcommand{\R}[1]{\mathbb{R}^{#1}}

\newcommand{\seq}[1]{[#1]}

\newcommand{\depsettir}[3]{\mathcal{D}^{#2}_{#3,#1}}
\newcommand{\depsetapxtir}[3]{\accentapx{\mathcal{D}}^{#2}_{#3,#1}}
\newcommand{\depsetapxZti}[2]{\accentapx{\mathcal{D}}^{#2,0}_{#1}}
\newcommand{\depsetapxtiq}[3]{\accentapx{\mathcal{D}}^{#2}_{#1,#3}}

\newcommand{\depop}[2]{\phi^{[#1]}\left(#2\right)}  

\newcommand{\vectorBlock}[2]{#1^{[#2]}}
\newcommand{\vectorBlockOwned}[3]{#1^{#3,[#2]}}

\newcommand{\vectorOwned}[2]{#1^{#2}}
\newcommand{\vectorOwnedIteration}[3]{#1^{#2}(#3)}
\newcommand{\vectorIteration}[2]{#1(#2)}
\newcommand{\vectorBlockOwnedIteration}[4]{#1^{#3,[#2]}(#4)}
\newcommand{\matrixRowBlock}[2]{#1^{[#2]}}
\newcommand{\matrixBlock}[3]{#1^{[#2][#3]}}
\newcommand{\B}[1]{\mathbf{B}^{[#1]}} 
\newcommand{\M}[1]{\mathbf{M}^{[#1]}} 
\newcommand{\Bs}[2]{\mathbf{B}^{[#1]}_{#2}} 
\newcommand{\Ms}[3]{\mathbf{M}^{[#1]}_{#2,#3}} 

\newcommand{\vbf}{\mathbf{v}}
\newcommand{\vbfb}[1]{\vectorBlock{\vbf}{#1}} 
\newcommand{\ubf}{\mathbf{u}}
\newcommand{\Wbf}{\mathbf{W}}
\newcommand{\Wbfrb}[1]{\matrixRowBlock{\Wbf}{#1}} 
\newcommand{\Wbfb}[2]{\matrixBlock{\Wbf}{#1}{#2}} 

\newcommand{\Tmax}{T}
\newcommand{\T}[1]{T_{#1}}
\newcommand{\sliwin}[1]{\tau(#1)} 
\newcommand{\psliwin}[1]{\pi(#1)} 
\newcommand{\maxIter}{k_{\max}}

\newcommand{\step}{\eta}
\DeclareMathOperator*{\argmax}{argmax}

\newcommand{\updateIterSet}[2]{\Psi_{#1}^{#2}}
\newcommand{\updateIterRef}[4]{\vectorBlockOwned{\sigma_{#1}}{#2}{#3}(#4)}
\newcommand{\cupdateIterRef}[3]{\vectorBlockOwned{\sigma_{#1}}{#2}{#3}}
\newcommand{\maxCalcs}{k_{\max}}

\ifbool{useappendix}{
\newcommand{\appref}[1]{See Appendix \ref{#1}.}
}{
\newcommand{\appref}[1]{See authors' technical report~\cite[Appendix \ref*{#1}]{pooley2026technical}.}
\includeonly{}
}

\definecolor{dark-yellow}{rgb}{0.6,0.6,0}
\definecolor{pink}{rgb}{1.0,0.2,0.8}
\newcommand{\ap}[1]{#1} 

\ifbool{useappendix}
{\title{\LARGE \bf
Technical Report: Asynchronous Distributed Trajectory \\ Estimation of Multi-Robot Systems
}}
{\title{\LARGE \bf
Asynchronous Distributed Trajectory \\ Estimation of Multi-Robot Systems
}}

\author{Adam Pooley and Matthew T. Hale$^1$
\thanks{$^1$ Authors are with the School of Electrical and Computer
Engineering, Georgia Institute of Technology, Atlanta, GA, USA.
Emails: \texttt{\{apooley3,mhale30\}@gatech.edu}
}
\thanks{
This work was supported by 
AFRL under grants FA8651-24-2-0003 and FA8651-23-F-A006 and
AFOSR under grant FA9550-19-1-0169. 
}
}

\IEEEoverridecommandlockouts
\begin{document}

\maketitle
\thispagestyle{empty}
\pagestyle{empty}

\begin{abstract}
Distributed trajectory estimation arises in many applications across robotics, but
existing implementations typically do not consider asynchrony in agents' communications
and computations. 
Therefore, we propose an asynchronous block coordinate descent algorithm for
distributed trajectory estimation. 
We consider a team of agents that observes a team of robots and estimates
the robots' states over a sliding window. 
The agents solve 
an approximation of the maximum a posteriori estimation problem, 
which we derive. We show
this approximation introduces negligible errors
and eliminates up to 96.9\% of communications among agents.
Next, we prove
that agents' iterates converge exponentially fast to the optimal estimate
of the robots' states. 
Simulations show that 
this approach has up to 64\%
less error than a comparable state-of-the-art algorithm.
Experiments on mobile robots show this approach 
is robust 
to delays whose lengths span three orders of magnitude.
\end{abstract}
\section{Introduction} \label{sect_introduction}
State estimation is a fundamental problem in autonomy, with applications ranging from mobile robotics and transportation to 
co-robots working alongside humans~\cite{thrun2005probabilistic, shorinwa2020distributed, khodayi2019distributed}.
State estimates can be computed with data from multiple sensors, and 
decentralized sensing offers several benefits,
including robustness to individual sensor failures 
and the ability to use different sensing modalities. 
When sensors are embedded in systems that can compute and communicate,
distributed state estimation (DSE) allows for collaborative estimation of the state of a system.
In this paper, 
we refer to the computing entities as ``agents'' and the observed robots as the ``targets'',
because estimation may be done by external processors rather than the robots themselves.

Existing works in the DSE literature~\cite{shorinwa2020distributed, olfati2005distributed, olfati2007distributed, shorinwa2024distributedtutorial, mateos2010distributed, shi2015extra, roumeliotis2002distributed} often
assume that agents compute and communicate synchronously. 
However, real-world systems often face computation and communication delays, e.g., due to 
adversarial jamming~\cite{priyadarshani2025jamming} or
saturated bandwidth~\cite{li2014communication}. 
Computation speeds can 
vary among agents due to heterogeneous hardware~\cite{nassralla2023hybrid, wang2025efficient}.
Attempts to synchronize agents' operations
can induce the ``straggler effect''~\cite{karakus2019redundancy},
which causes agents to only compute and communicate at the rate of the slowest among them. 
In such scenarios, synchronizing agents can significantly slow the convergence of 
multi-agent systems~\cite{dean2013tail}.

Therefore, in this paper we propose a decentralized state estimation algorithm that is 
designed to operate with asynchronous computations and communications.
We assume only that delays in agents' computations and communications are bounded (though the bound can be arbitrary),
which is called ``partial asynchrony\ap{''}~\cite{bertsekas1989parallel}. 
We consider state estimation for multi-robot systems, and 
agents solve a 
maximum \emph{a posteriori} (MAP) optimization problem whose solution
is the optimal state estimate of the target system. 
Our problem formulation 
estimates states over a sliding window of time, which
enables new measurements of the target system's outputs to be used to improve 
estimates of past states.

Our algorithm uses block coordinate descent (BCD) to jointly compute state estimates.
Each agent stores estimated values of all states of the target system onboard, but each agent 
computes 
estimates for only a subset of these states.
Agents communicate with each other to share updated values
of state estimates over time.
We emphasize that we do not simply apply BCD to a state estimation problem. 
Instead,
we show in a precise way how using BCD to solve the MAP problem
requires certain pairs of agents to communicate.
Then, we derive an approximation of the MAP problem
that substantially reduces required communications while inducing only negligible errors.

\subsection{Summary of Contributions}
The contributions of this work are as follows:
\begin{enumerate}
    \item We derive an approximated update law that maintains high accuracy while promoting sparsity of the required communications among agents 
    (Lemma~\ref{lemma_depsetapx}).
    \item We prove convergence to the solution of the approximated
    MAP problem and derive an exponential convergence rate under both asynchronous communications and asynchronous computations (Theorem~\ref{theorem_convergence_rate}).
    \item We present simulation results that compare our algorithm to a state-of-the-art algorithm
    and show an error reduction of up to 64\%
    (Section~\ref{sect_results_simulation}).
    \item We present hardware results that show the robustness of our algorithm under varying magnitudes of communication delays  (Section~\ref{sect_results_experiments}).
\end{enumerate}

\subsection{Related Works}
Different notions of asynchrony exist across the literature. 
We define asynchrony as
in the distributed optimization literature~\cite{bertsekas1989parallel},
which allows asynchronous computations and communications among agents. 
The development of asynchronous algorithms 
has been identified as 
a goal in the robotics literature~\cite{testa2025tutorial,jaleel2020distributed}, and
this paper contributes to that goal by developing a parallelized
asynchronous algorithm for state estimation. 
Related work in~\cite{ziegler2021distributed,xu2024d} applies distributed optimization techniques
to state estimation problems for certain classes of UAV swarms, though we differ by estimating
states of an arbitrary time-varying system. 
The closest prior work is~\cite{shorinwa2020distributed}. 
That work uses synchronous CADMM-based updates,
while the algorithm we present
uses asynchronous BCD updates.
We show in Section~\ref{sect_results} that
our algorithm attains up to a 64\% reduction
in estimation error relative to the algorithm from~\cite{shorinwa2020distributed}.

The rest of the paper is organized as follows. 
Section \ref{sect_problem_statement} provides a problem statement,
Section \ref{sect_synchronous_algorithm} analyzes communications, 
Section \ref{sect_approx_obj_reform} presents an approximation that promotes communication sparsity,
Section \ref{sect_asynch_algorithm} presents the algorithm,
Section \ref{sect_convergence} provides convergence analysis,
Section \ref{sect_results} presents simulations and experiments,
and Section \ref{sect_conclusion} concludes.

\subsection{Notation}
We use~$\mathbb{N}$ to denote the non-negative integers and~$\mathbb{N}^+$ to denote the positive integers.
We use $\norm{\cdot}$ to denote the Euclidean norm.
Given a symmetric, positive-definite matrix $Q\in\R{n\times n}$ and some $x\in\R{n}$,
we define $\norm{x}_Q = \sqrt{x^\top Q x}$.
We use $\zero{n\times m}$ to denote the matrix of zeros with $n$ rows and $m$ columns. 
We use $\eye{n\times n}$ to denote the~$n \times n$ identity matrix.
We use $\seq{d}$ to denote $\{1,\dots,d\}$ for $d\in\mathbb{N}^+$.
For matrices $A, B, C$, we write $\text{blkdiag}(A, B, C)$ for the 
block-diagonal matrix with~$A$, $B$, and~$C$ on its main diagonal. 
Any union of the form~$\bigcup_{i=a}^{b} S_i$ is empty if~$b < a$. 
\section{Problem Statement} \label{sect_problem_statement}
This section describes our system model and MAP formulation,
followed by a formal problem statement. 

\subsection{Multi-Agent Sensing Model}
\label{sect_multi-agent-system-model}

We consider $N \in \mathbb{N}^+$ agents, each 
of which measures outputs of the ``target system'',
which is a multi-robot system with state~$\x{t} \in \R{n}$ at time step $t \in \mathbb{N}$
for some~$n \in \mathbb{N}^+$. 
The state~$\x{t}$ contains the states of all robots in the system, and it 
has the time-varying dynamics
\begin{equation}
    \label{eq_state_transition_model}
    \x{t+1} = \A{t}\x{t} + \procNoise{t},
\end{equation}
where $\A{t}\in\R{n\times n}$
and the process noise $\procNoise{t}\in\R{n}$ obeys $\procNoise{t}\sim\gaussian{\zero{}}{\Q{t}}\in\R{n}$ with $\Q{t} = \Q{t}^\top \succ 0$.

For each~$i \in [N]$, 
agent $i$'s measurement at time~$t$
is denoted $\yi{t}{i}\in\R{m_i}$ 
with~$m_i \in \mathbb{N}^+$, and it is given by 
    $\yi{t}{i} = \Ci{t}{i} \x{t} + \measNoisei{t}{i}$,
where $\Ci{t}{i}\in\R{m_i\times n}$ and~$\measNoisei{t}{i}\in\R{m_i}$ is measurement noise. 
We set 
$\y{t} = \left[ (\yi{t}{1})^\top \cdots (\yi{t}{N})^\top \right]^\top \in \R{m}$,
$\C{t} = \left[ (\Ci{t}{1})^\top \cdots (\Ci{t}{N})^\top \right]^\top \in \R{m\times n}$, along with 
$\measNoise{t} = \left[ (\measNoisei{t}{1})^\top \cdots (\measNoisei{t}{N})^\top \right]^\top \in \R{m}$,
with $m = \sum_{i=1}^N m_i$. 
Then
\begin{equation}
    \label{eq_measurement_model}
    \y{t} = \C{t} \x{t} + \measNoise{t}
\end{equation}
models measurements from all agents at time~$t$. 
We consider measurement noise~$\measNoise{t}\sim \gaussian{\zero{}}{\Rbf{t}}$,
where~$\Rbf{t} = \Rbf{t}^\top \succ 0$. 
We allow~$\Rbf{t}$ to be non-block diagonal, which allows
measurements to be correlated among agents.

\subsection{MAP Formulation}
\label{sect_map_formulation}
We first consider 
the objective function 
from~\cite{shorinwa2020distributed}, and we modify it
in Section~\ref{sect_approx_obj_reform}
to make it more amenable to decentralized implementation.

Consider estimating the state trajectory of a multi-robot
system across a time horizon from time~$0$ to some time~$t$. 
We denote the optimal state estimate of the system by
$\xest{0:t} := [\xest{0}^\top \, \cdots \, \xest{t}^\top]^\top \in \mathbb{R}^{n(t+1)}$,
where~$\xest{\ell} \in \mathbb{R}^n$ is the estimated state at time~$\ell \in \{0, \ldots, t\}$.
We model the system's initial state $\x{0}$
as a Gaussian distributed random variable with mean $\xestprior{0}$ and covariance $\Pestprior{0}$. 
The optimal estimate $\xest{0:t}$
can be computed in terms of
a prior for the initial state, 
namely $\xestprior{0}$, 
and all measurements up to time $t$, namely~$\y{1:t} = [\y{1}^\top \, \cdots \, \y{t}^\top]^\top$. 
Mathematically, the MAP estimate~$\xest{0:t}$ of the
trajectory~$\x{0:t}$ 
is given by 
\begin{equation}
    \xest{0:t} 
    \label{eq_map_total}
    \!=\! \argmax_{\mapvar{0:t}} p(\mapvar{0} \!\mid\! \xestprior{0}) \prod_{\ell=0}^{t-1} p(\mapvar{\ell+1} \!\mid\! \mapvar{\ell}) \prod_{\ell=1}^{t}p(\y{\ell} \!\mid\! \mapvar{\ell}),
\end{equation}
where the conditional PDFs are Gaussians. Then, 
the posterior PDF of $\x{0:t}$ is also Gaussian, 
which we represent in closed form using the mean $\xest{0:t}$ and the covariance $\Pest{0:t}$.  

We are interested in using current measurements to 
improve the accuracy of state estimates from 
up to $\Tmax \in \ap{\mathbb{N}^+}$ time steps prior to the current time. 
Therefore, 
we consider a sliding window approach in which measurements at time~$t$ can be used to improve state estimates 
over the sliding window~$\{t - \T{t}, \ldots, t\}$, 
where~$\T{t} = \min\{t,\Tmax\}$.
The estimated trajectory over this window is denoted by
$\xest{t-\T{t}:t} = \begin{bmatrix}
    \xest{t-\T{t}}^\top & \cdots & \xest{t}^\top
\end{bmatrix}^\top$.

We begin with 
a prior belief over the sliding window~$\{t-\T{t}, \dots, t-1\}$, 
which is Gaussian distributed 
with mean $\xestprior{t-\T{t}:t-1}$ and covariance $\Pestprior{t-\T{t}:t-1}\succ0$. 
We calculate the estimate $\xest{t-\T{t}:t}$ by maximizing the posterior probability of the estimate over 
the current sliding window, namely $\x{t-\T{t}:t}$, 
conditioned on the prior estimate $\xestprior{t-\T{t}:t-1}$ 
and the current measurement $\y{t}$. This maximization is written as
\begin{multline}
    \label{eq_map_sliding_window}
    \xest{t-\T{t}:t} = \argmax_{\x{t-\T{t}:t}}
    p(\x{t-\T{t}:t-1}\mid\xestprior{t-\T{t}:t-1})\\
    \cdot p(\x{t}\mid\x{t-1}) p(\y{t}\mid\x{t}).
\end{multline}

We use \eqref{eq_state_transition_model} and \eqref{eq_measurement_model}
to write an objective function whose minimizer is the solution to \eqref{eq_map_sliding_window},
given by
\begin{multline}
    \label{eq_obj_func_true_norms}
    \J(\xest{t-\T{t}:t}) = \norm{\xest{t}-\A{t-1}\xest{t-1}}_{\Q{t-1}^{-1}}^2
    + \norm{\y{t}-\C{t}\xest{t}}_{\Rbf{t}^{-1}}^2 \\
    + \norm{\xest{t-\T{t}:t-1} - \xestprior{t-\T{t}:t-1}}_{\Pestprior{t-\T{t}:t-1}^{-1}}^2.
\end{multline}

We define 
\begin{align}
    \F{t} \!&=\!\begin{bmatrix}
        \zero{n\times n(\T{t}-1)}  & \hspace{-6pt}-\A{t-1} & \! \eye{n\times n}
    \end{bmatrix}\!,
     \G{t} \!=\! \begin{bmatrix}
        \zero{m\times n\T{t}} & \C{t}
    \end{bmatrix}
    \\
    \Pibf{t} &= \begin{bmatrix}
        \eye{n\T{t} \times n\T{t}} & \zero{n\T{t} \times n}
    \end{bmatrix}, 
    \, 
    \label{eq_def_H}
    \Hbf{t} = \begin{bmatrix}
        \F{t}^\top & \G{t}^\top & \Pibf{t}^\top
    \end{bmatrix}^\top
    \\
    \z{t} &= \begin{bmatrix}
        \zero{1\times n} & \y{t}^\top & \xestprior{t-\T{t}:t-1}^\top
    \end{bmatrix}^\top
    \\
    \label{eq_def_W}
    \W{t} &= \text{blkdiag}\left(\Q{t-1}, \Rbf{t}, \Pestprior{t-\T{t}:t-1}\right), 
\end{align}
and these matrices allow us to express~$J$ in \eqref{eq_obj_func_true_norms} as
\begin{equation}
    \label{eq_obj_func_true_quad}
    \J(\xest{t-\T{t}:t}) = \frac{1}{2} \xest{t-\T{t}:t}^\top \K{t} \xest{t-\T{t}:t} + \f{t}^\top \xest{t-\T{t}:t} + c_t,
\end{equation}
where
    $\K{t} = 2\Hbf{t}^\top \W{t}^{-1} \Hbf{t}$,      
    $\f{t} = -2\Hbf{t}^\top \W{t}^{-1} \z{t}$, and
    ${c_t = \z{t}^\top \W{t}^{-1} \z{t}}$.

We define the sequences $\sliwin{t} = \{t-\T{t},\dots,t\}$ and $\psliwin{t} = \{t-\T{t},\dots,t-1\}$
so that we can denote terms like $\xest{t-\T{t}:t}$ and $\xestprior{t-\T{t}:t-1}$ by
$\xest{\sliwin{t}}$ and $\xestprior{\psliwin{t}}$, respectively.

\subsection{Problem Statement}
The following problem is the focus of the rest of the paper. 

\begin{problem}
    \label{problem_synch}
    Consider $N$ agents observing a target system with dynamics given by \eqref{eq_state_transition_model} and measurement model given by \eqref{eq_measurement_model}.
    Given a prior distribution with mean $\xestprior{\psliwin{t}}$ and covariance $\Pestprior{\psliwin{t}}$, 
    over a sliding window $\{t-\T{t},\dots,t\}$ solve
    \begin{equation}
        \xest{\sliwin{t}} = \underset{\x{\sliwin{t}}}{\text{minimize}} \hspace{1em} \J(\x{\sliwin{t}}),
    \end{equation}
    for $t=1,2,\dots$, where $\J$ is given by \eqref{eq_obj_func_true_quad}.
\end{problem}

The solution to Problem~\ref{problem_synch} will be the estimate $\xest{\sliwin{t}}$.
After solving this problem at time $t$, the sliding window is advanced forward by one time step and the problem is reinitialized at time $t+1$, which requires a new prior distribution.
By defining
\begin{equation} 
    \label{eq_def_U}
    \U{t} = \begin{bmatrix}
        \zero{n\T{t+1}\times n(\T{t}-\T{t+1}+1)} & \eye{n\T{t+1}\times n\T{t+1}}
    \end{bmatrix},
\end{equation}
we relate the prior at time~$t+1$
to the sliding window estimate from time~$t$ via 
\begin{align}
    \label{eq_prior_update}
    \xestprior{\psliwin{t+1}} &= \U{t} \xest{\sliwin{t}},
    \\
    \label{eq_cov_prior_update}
    \Pestprior{\psliwin{t+1}} &= \U{t} \Pest{\sliwin{t}} \U{t}^\top.
\end{align}

In words, \eqref{eq_prior_update} and \eqref{eq_cov_prior_update} calculate the prior at time $t+1$ while accounting for the growth of the sliding window.
We note that when $t<T$, the sliding window grows and the entire previous estimate is included in the next prior.
Mathematically, if $t<T$, then $\U{t} = \eye{n\T{t+1}\times n\T{t+1}}$,
the relation in 
\eqref{eq_prior_update} reduces to $\xestprior{\psliwin{t+1}} = \xest{\sliwin{t}}$,
and \eqref{eq_cov_prior_update} reduces to $\Pestprior{\psliwin{t+1}} = \Pest{\sliwin{t}}$.

\section{Problem-Induced Communications} \label{sect_synchronous_algorithm}
This section presents a synchronous algorithm to solve Problem~\ref{problem_synch}
to determine the required communications among agents.
By showing that agents will generally require all-to-all communications,
we motivate an approximation
to the prior update law that 
reduces communications while
introducing negligible estimation error, which we derive in Section~\ref{sect_approx_obj_reform}.
Then we present an asynchronous algorithm in Section~\ref{sect_asynch_algorithm}.


\subsection{Block Notation}
We present notation for manipulating blocks of vectors and matrices.
Consider a vector $\vbf\in\R{r}$ and a matrix $\Wbf\in\R{r\times q}$, 
where $r=\sum_{i=1}^N r_i$ and $q=\sum_{i=1}^N q_i$.
We denote the ``$\superth{i}$ block'' of $\vbf$ as $\vbfb{i}\in\R{r_i}$, so that
$\vbf = \begin{bmatrix}
    \left(\vbfb{1}\right)^\top & \cdots & \left(\vbfb{N}\right)^\top
\end{bmatrix}^\top$.
We denote the ``$\superth{i}$ row block'' of $\Wbf$ as 
$\Wbfrb{i}\in\R{r_i\times q}$, so that 
$\Wbf = \begin{bmatrix}
    \left(\Wbfrb{1}\right)^\top & \cdots & \left(\Wbfrb{N}\right)^\top
\end{bmatrix}^\top$,
and we denote the ``$\superth{i}\superth{j}$ block'' of $\Wbf$ as $\Wbfb{i}{j}\in\R{r_i\times q_i}$, so that
$\Wbfrb{i} = \begin{bmatrix}
    \Wbfb{i}{1} & \cdots & \Wbfb{i}{N}
\end{bmatrix}$.
We define the block slicing matrix $\Bs{i}{r} \in \R{r_i \times r}$
in terms of $\vbfb{i} = \Bs{i}{r} \vbf$
according to
\begin{equation}
    \Bs{i}{r} = \begin{bmatrix}
        \zero{r_i\times r_1} & 
        \hspace{-8pt}\cdots & 
        \hspace{-8pt}\zero{r_i\times r_{i-1}} & 
        \hspace{-8pt}\eye{r_i \times r_i} &
        \hspace{-8pt}\zero{r_i\times r_{i+1}} & 
        \hspace{-8pt}\cdots & 
        \hspace{-8pt}\zero{r_i\times r_N}
    \end{bmatrix},
\end{equation}
where
$\Wbfrb{i} = \Bs{i}{r} \Wbf$ and
$\Wbfb{i}{j} = \Bs{i}{r} \Wbf (\Bs{j}{q})^\top$.

\subsection{Trajectory Block Notation}
To apply these ideas to concatenations of vectors and matrices, we define the concatenated block slicing matrix
\begin{equation}
    \label{eq_def_M}
    \Ms{i}{r}{\ell} = \eye{\ell\times\ell} \otimes \Bs{i}{r}.
\end{equation}

Consider a concatenated vector 
$\vbf_{1:\ell} = [\vbf_{1}^\top \cdots \vbf_{\ell}^\top]^\top \in \R{\ell r}$,
where $\vbf_{i}\in\R{r}$ for all $i\in[\ell]$,
and a concatenated matrix
$\Wbf_{1:\ell} = \begin{bmatrix}
    \Wbf_{1,1} & \cdots & \Wbf_{1,\ell} \\
    \vdots & \ddots & \vdots \\
    \Wbf_{\ell,1} & \cdots & \Wbf_{\ell,\ell}
\end{bmatrix} \in \R{\ell r \times \ell q}$,
where $\Wbf_{i,j}\in\R{r\times q}$ for all $i,j \in [\ell]$.
We denote the ``$\superth{i}$ block'' of $\vbf_{1:\ell}$ as $\vbfb{i}_{1:\ell} \in \R{\ell r_i}$, so that $\vbfb{i}_{1:\ell} = \Ms{i}{r}{\ell} \vbf_{1:\ell}$.
We denote the ``$\superth{i}$ row block'' of $\Wbf_{1:\ell}$ as $\Wbfrb{i}_{1:\ell} \in \R{\ell r_i \times \ell q}$, so that $\Wbfrb{i}_{1:\ell} = \Ms{i}{r}{\ell} \Wbf_{1:\ell}$, and
we denote the ``$\superth{i}\superth{j}$ block'' of $\Wbf_{1:\ell}$ as $\Wbfb{i}{j}_{1:\ell} \in \R{\ell r_i \times \ell q_i}$, 
so that $\Wbfb{i}{j}_{1:\ell} = \Ms{i}{r}{\ell} \Wbf_{1:\ell} (\Ms{j}{q}{\ell})^\top$.


For brevity, $\Bs{i}{r}$ and $\Ms{i}{r}{\ell}$ are often written as $\B{i}$ and $\M{i}$,
where $r$ and $\ell$ are implied by the dimensions of the matrices and vectors they multiply.

\subsection{Block Coordinate Descent}
We first describe some of the computations BCD must perform to solve Problem~\ref{problem_synch}. 
Agent $i$ computes the $\superth{i}$ block of the estimate,
which is 
$\xestb{\sliwin{t}}{i} = \M{i} \xest{\sliwin{t}}$.
Estimation is performed via iterative rounds of computing new estimates and communicating them with other agents.
The times at which these operations are performed are
indexed by an iteration counter $k$.
For a fixed $t$, the algorithm initializes at $k=0$ and performs some number of iterations $\maxIter \geq 1$.
Both~$k$ and~$\maxIter$ are introduced only for analysis. The agents do not need to know
the values of~$\maxIter$ or~$k$.




Agent $i$ stores three quantities onboard:
(i) $\yo{t}{i}$, which is agent $i$'s local copy of all measurements 
of the target system at time $t$,
(ii) $\xestoi{\sliwin{t}}{i}{k}$, which is agent $i$'s local copy of the trajectory estimate $\xest{\sliwin{t}}$ 
at iteration $k$, and 
(iii) $\xestprioro{\psliwin{t}}{i}$, which is agent $i$'s local copy of the prior $\xestprior{\psliwin{t}}$.
These variables are listed in Table~\ref{table_vars_synch}.
\begin{table}[tb]
    \centering
    \caption{Symbols and definitions for agents' onboard information}
    \label{table_vars_synch}
    \begin{tabular}{|m{0.27\columnwidth}|m{0.58\columnwidth}|}
        \hline
        Symbol & Definition 
        \\ \hline
        $\xesto{\sliwin{t}}{i} \in \R{n(\T{t}+1)}$ 
        & Agent $i$'s local copy of the estimate $\xest{\sliwin{t}}$ 
        \\ \hline
        $\yo{t}{i} \in \R{m}$
        & Agent $i$'s local copy of the measurement $\y{t}$ 
        \\ \hline
        $\xestprioro{\psliwin{t}}{i} \in \R{n\T{t}}$ 
        & Agent $i$'s local copy of the prior $\xestprior{\psliwin{t}}$ 
        \\ \hline
        $\xestbo{\sliwin{t}}{i}{i} \in \R{n_i(\T{t}+1)}$ 
        & Agent $i$'s local copy of the $\superth{i}$ block \newline of the estimate $\xest{\sliwin{t}}$ 
        \\ \hline
        $\ybo{t}{i}{i} \in \R{m_i}$
        & Agent $i$'s local copy of the $\superth{i}$ block \newline of the measurement $\y{t}$ 
        \\ \hline
        $\xestpriorbo{\psliwin{t}}{i}{i} \in \R{n_i\T{t}}$ 
        & Agent $i$'s local copy of the $\superth{i}$ block \newline of the prior $\xestprior{\psliwin{t}}$ 
        \\ \hline
    \end{tabular}
\end{table}

At the start of time step $t$, each agent takes a measurement of the target system, which for agent $i$ \ap{was previously denoted $\yi{t}{i}$. For consistent use with block notation, agent $i$'s measurement will be denoted by $\ybo{t}{i}{i}$}.
Then, an initial round of communication occurs in which, for each $i\in\seq{N}$, agent $i$ broadcasts their measurement $\ybo{t}{i}{i}$ and prior estimate $\xestpriorbo{\psliwin{t}}{i}{i}$.
Agent $i$ receives $\ybo{t}{j}{j}$ from agents $j\in\seq{N}\setminus \{i\}$ and agent $i$ receives $\xestpriorbo{\psliwin{t}}{j}{j}$ from agents $j\in\seq{N}\setminus \{i\}$. 
Agent $i$ sets $\ybo{t}{j}{i} = \ybo{t}{j}{j}$ and $\xestpriorbo{\psliwin{t}}{j}{i} = \xestpriorbo{\psliwin{t}}{j}{j}$ for all $j\in\seq{N}\setminus\{i\}$.

For all $i\in\seq{N}$, agent $i$ initializes time step~$t$'s estimate $\xestboi{\sliwin{t}}{i}{i}{0}$ using the prior estimate $\xestprioro{\psliwin{t}}{i}$ and the $\superth{i}$ block of the transition model \eqref{eq_state_transition_model}, according to
\begin{equation}
    \label{eq_alg_prior_propagate}
    \xestboi{\sliwin{t}}{i}{i}{0} = \matrixRowBlock{\priorpropmat{t}}{i} \xestprioro{\psliwin{t}}{i}, \  
    \priorpropmat{t} = \begin{bmatrix}
        \eye{n\T{t} \times n\T{t}} \\
        \begin{matrix}
            \zero{n\times n(\T{t}-1)} & \A{t-1}
        \end{matrix}
    \end{bmatrix}.
\end{equation}
This step forms the initial trajectory estimate with the prior estimate $\xestpriorbo{\psliwin{t}}{i}{i}$,\
along with its final state value propagated through the dynamics, given by $\matrixRowBlock{\A{t-1}}{i} \xestprioro{t-1}{i}$.
Agents communicate and compute for $\maxIter$ iterations for each timestep~$t$.

At certain times, agent $i$ receives $\xestboi{\sliwin{t}}{j}{j}{k}$ 
from agent $j$ for $j\in\seq{N}\setminus\{i\}$.
Agent $i$ overwrites its local copy of $\xestb{\sliwin{t}}{j}$ by setting $\xestboi{\sliwin{t}}{j}{i}{k} = \xestboi{\sliwin{t}}{j}{j}{k}$.
Agent $i$ uses $\xesto{\sliwin{t}}{i}$ to perform a block gradient update with step size $\step$, given by
\begin{equation}
    \label{eq_bcd_update_gradient}
    \xestboi{\sliwin{t}}{i}{i}{k+1}
    = \xestboi{\sliwin{t}}{i}{i}{k}
    - \step \cdot \gradb{i} \J(\xestoi{\sliwin{t}}{i}{k}),
\end{equation}
where $\gradb{i} \J$ denotes the $\superth{i}$ block of the gradient of $\J$,
i.e., $\gradb{i} \J = \M{i} \nabla \J$, where $\nabla J$ is calculated as
\begin{equation}
    \label{eq_gradient_true}
    \nabla \J(\xest{\sliwin{t}}) = \K{t} \xest{\sliwin{t}} + \f{t} 
\end{equation}
and $\M{i}$ is from \eqref{eq_def_M}.
The step size $\step$ is constant for all agents and 
will be discussed in Section~\ref{sect_convergence}.
We define
\begin{equation} \label{eq_coeffmaty}
    \coeffmaty{t} = -2\G{t}^\top \Rinv{t} \quad \textnormal{ and } \quad 
    \coeffmatxestprior{t} = -2\Pibf{t}^\top \Pestprior{\psliwin{t}}^{-1},
\end{equation}
which we use to express \eqref{eq_gradient_true} as
\begin{equation}
    \label{eq_gradient_true_explicit}
    \nabla J(\xest{\sliwin{t}}) = \K{t} \xest{\sliwin{t}} + \coeffmaty{t} \y{t} + \coeffmatxestprior{t} \xestprior{\psliwin{t}}.
\end{equation}
Using \eqref{eq_gradient_true_explicit} in \eqref{eq_bcd_update_gradient} yields
\begin{multline}
    \label{eq_alg_bcd_update_synch}
    \xestboi{\sliwin{t}}{i}{i}{k+1}
    = \xestboi{\sliwin{t}}{i}{i}{k}
    \\
    - \eta \matrixRowBlock{\K{t}}{i} \xestoi{\sliwin{t}}{i}{k}
    - \eta \matrixRowBlock{\coeffmaty{t}}{i} \yo{t}{i}
    - \eta \matrixRowBlock{\coeffmatxestprior{t}}{i} \xestprioro{\psliwin{t}}{i}.
\end{multline}

After completing $\maxIter$ iterations for time step $t$, agent $i$ calculates its next prior estimate, $\xestpriorbo{\psliwin{t+1}}{i}{i}$, using their final state estimate from time step~$t$, 
$\xestboi{\sliwin{t}}{i}{i}{\maxIter}$, 
and the $\superth{i}$ block of \eqref{eq_prior_update}, according to
    $\xestpriorbo{\psliwin{t+1}}{i}{i} = \matrixBlock{\U{t}}{i}{i} \xestboi{\sliwin{t}}{i}{i}{\maxIter}$.
We emphasize that~$\xestboi{\sliwin{t}}{i}{i}{\maxIter}$ is simply the last iterate generated by
agent~$i$ when estimating~$\x{\sliwin{t}}$. 

\subsection{Induced Communications} \label{sect_induced_comms_est}
Each agent's update law contains terms that must be sent to it by other agents. 
To analyze required communications, we express \eqref{eq_alg_prior_propagate} and \eqref{eq_alg_bcd_update_synch},
respectively, 
%
%
as 
\begin{align}
    \label{eq_alg_prior_propagate_sum}
    \xestboi{\sliwin{t}}{i}{i}{0} &= \sum_{j=1}^N \matrixBlock{\priorpropmat{t}}{i}{j} \xestpriorbo{\psliwin{t}}{j}{i}
    \\
    \xestboi{\sliwin{t}}{i}{i}{k+1}
    &= \xestboi{\sliwin{t}}{i}{i}{k}
    - \eta \sum_{j=1}^N \matrixBlock{\K{t}}{i}{j} \xestboi{\sliwin{t}}{j}{i}{k}
    \\
    \label{eq_alg_bcd_update_synch_sum}
    -& \eta \sum_{j=1}^N \matrixBlock{\coeffmaty{t}}{i}{j} \ybo{t}{j}{i}
    - \eta \sum_{j=1}^N \matrixBlock{\coeffmatxestprior{t}}{i}{j} \xestpriorbo{\psliwin{t}}{j}{i}. 
\end{align}

To minimize the induced communications, we seek to eliminate
as many terms as possible from the sums in
\eqref{eq_alg_prior_propagate_sum} and \eqref{eq_alg_bcd_update_synch_sum}. 
We can remove any terms whose
corresponding $\superth{i}\superth{j}$
matrix block equals zero.
We introduce the dependency operator on the $\superth{i}$ row block of a matrix $\mathbf{A}$ as
$\depop{i}{\mathbf{A}} = \left\{j: (\mathbf{A})^{[i][j]} \neq \zero{}\right\}$,
which 
contains the indices of nonzero column blocks of $\matrixRowBlock{\mathbf{A}}{i}$.

We formally define the dependency sets 
\begin{multline}
    \label{eq_depsets_synch}
    \depsettir{t}{i}{\A{}} = \depop{i}{\priorpropmat{t}}, \quad
    \depsettir{t}{i}{\xest{}} = \depop{i}{\K{t}}, \\
    \depsettir{t}{i}{\y{}} = \depop{i}{\coeffmaty{t}}, \quad 
    \depsettir{t}{i}{\xestprior{}} = \depop{i}{\coeffmatxestprior{t}},
\end{multline}
where $\depsettir{t}{i}{\A{}}$ encodes the blocks of $\xestprioro{\psliwin{t}}{i}$ needed to initialize the current estimate, 
and the sets $\depsettir{t}{i}{\xest{}}$, $\depsettir{t}{i}{\y{}}$, and $\depsettir{t}{i}{\xestprior{}}$, respectively, encode the blocks of $\xesto{\sliwin{t}}{i}$, $\yo{t}{i}$, and $\xestprioro{\psliwin{t}}{i}$ needed to calculate the next gradient update.
Using these sets, we express \eqref{eq_alg_prior_propagate_sum} as $\xestboi{\sliwin{t}}{i}{i}{0} = \sum_{j\in\depsettir{t}{i}{\A{}}} \matrixBlock{\priorpropmat{t}}{i}{j} \xestpriorbo{\psliwin{t}}{j}{i}$ and \eqref{eq_alg_bcd_update_synch_sum} as
\begin{multline}
    \label{eq_alg_bcd_update_synch_minsum}
    \xestboi{\sliwin{t}}{i}{i}{k+1}
    = \xestboi{\sliwin{t}}{i}{i}{k}
    - \eta \sum_{j\in\depsettir{t}{i}{\xest{}}} \matrixBlock{\K{t}}{i}{j} \xestboi{\sliwin{t}}{j}{i}{k}
    \\
    - \eta \sum_{j\in\depsettir{t}{i}{\y{}}} \matrixBlock{\coeffmaty{t}}{i}{j} \ybo{t}{j}{i}
    - \eta \sum_{j\in\depsettir{t}{i}{\xestprior{}}} \matrixBlock{\coeffmatxestprior{t}}{i}{j} \xestpriorbo{\psliwin{t}}{j}{i}.
\end{multline}

To compute these expressions, 
agent $i$ must receive
$\xestpriorbo{\psliwin{t}}{j}{j}$ from agents with indices $j \in \depsettir{t}{i}{\A{}}\cup\depsettir{t}{i}{\xestprior{}}$, the measurement
$\ybo{t}{j}{j}$ from agents with indices $j \in \depsettir{t}{i}{\y{}}$, and 
$\xestboi{\sliwin{t}}{j}{j}{k}$ from agents with indices $j \in \depsettir{t}{i}{\xest{}}$.
We define the information matrices
$\infmatestprior{0:0} = \Pestprior{0:0}^{-1}$,
$\infmatest{\sliwin{t}} = \Pest{\sliwin{t}}^{-1}$, and
$\infmatestprior{\psliwin{t}} = \Pestprior{\psliwin{t}}^{-1}$.

\begin{proposition}
    \label{proposition_dep_synch}
    Let $\depsettir{t}{i}{\xest{}} = \depop{i}{\K{t}}$ be defined from \eqref{eq_depsets_synch}.
    Then for $t\geq T$, we have $\depop{i}{\infmatA{t}} \subseteq \depsettir{t}{i}{\xest{}}$,
    where, for $t=T$,
        $\infmatA{t} = \infmatestprior{0:0} + \A{0}^\top \Q{0}^{-1} \A{0}$,
    and for $t>T$, $\infmatA{t}$ is recursively calculated according to
    \begin{multline}
        \label{eq_infmatA_recursive}
        \infmatA{t} = \C{t-\T{}}^\top \Rinv{t-\T{}} \C{t-\T{}} + \A{t-\T{}}^\top \Q{t-\T{}}^{-1} \A{t-\T{}}
        \\
        + \Q{t-\T{}-1}^{-1} - \Q{t-\T{}-1}^{-1} \A{t-\T{}-1} \infmatA{t-1}^{-1} \A{t-\T{}-1}^\top \Q{t-\T{}-1}^{-1}.
    \end{multline}
\end{proposition}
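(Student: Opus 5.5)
The plan is to identify $\infmatA{t}$ as a concrete sub-block of $\K{t}$ and then read off the dependency inclusion directly. Write $\K{t} = 2\Hbf{t}^\top \W{t}^{-1}\Hbf{t}$. The objective $\J$ in \eqref{eq_obj_func_true_quad} is twice the negative log-posterior up to a constant. Hence the posterior information matrix over the window is $\infmatest{\sliwin{t}} = \frac{1}{2}\K{t}$, with
\begin{equation}
\tfrac{1}{2}\K{t} = \F{t}^\top\Qinv{t-1}\F{t} + \G{t}^\top\Rinv{t}\G{t} + \Pibf{t}^\top \infmatestprior{\psliwin{t}}\Pibf{t}.
\end{equation}
I will index $\K{t}$ by time slots $t-\T{t},\dots,t$, each of size $n$. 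The goal is to show that for $t\ge \Tmax$ the first time-diagonal block of $\frac12\K{t}$, i.e.\ the block for state $t-\Tmax$, equals $\infmatA{t}$. The key step is an induction on $t\ge\Tmax$ proving two things at once: $\frac12 \K{t}$ is block tridiagonal in time, and its $(t-\Tmax,t-\Tmax)$ block is $\infmatA{t}$.

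\emph{Base case $t=\Tmax$.} For $t\le \Tmax$ the window grows and $\U{t}$ is the identity, so the prior information is the full information matrix of the previous window. Unrolling, $\frac12\K{\Tmax}$ is exactly the batch information matrix of \eqref{eq_map_total} over $\{0,\dots,\Tmax\}$. This matrix is block tridiagonal. There is no measurement at time $0$, so its $(0,0)$ block is $\infmatestprior{0:0} + \A{0}^\top\Qinv{0}\A{0}$, which matches the definition of $\infmatA{\Tmax}$.

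\emph{Inductive step $t>\Tmax$.} By \eqref{eq_cov_prior_update}, $\Pestprior{\psliwin{t}}$ is the marginal covariance of $\Pest{\sliwin{t-1}}$ after dropping state $t-1-\Tmax$. In information form this marginal is a Schur complement: partition $\frac12\K{t-1}$ into slot $1$ (state $t-1-\Tmax$) and the rest. The induction hypothesis gives that the slot-$1$ diagonal block is $\infmatA{t-1}$, and tridiagonality gives that the only nonzero coupling is to state $t-\Tmax$. I will check that this coupling is $-\A{t-\Tmax-1}^\top\Qinv{t-\Tmax-1}$ from the transition term. It is the only term coupling those two states, and earlier Schur complements never touch it. Hence $\infmatestprior{\psliwin{t}}$ is block tridiagonal, and its first diagonal block is
\begin{multline}
\C{t-\Tmax}^\top\Rinv{t-\Tmax}\C{t-\Tmax} + \Qinv{t-\Tmax-1} \\
- \Qinv{t-\Tmax-1}\A{t-\Tmax-1}\infmatA{t-1}^{-1}\A{t-\Tmax-1}^\top\Qinv{t-\Tmax-1}
\end{multline}
plus, when $\Tmax\ge 2$, the term $\A{t-\Tmax}^\top\Qinv{t-\Tmax}\A{t-\Tmax}$ from the transition into $t-\Tmax+1$. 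If $\Tmax=1$, that transition term instead enters through $\F{t}^\top\Qinv{t-1}\F{t}$. In either case, adding the $\F{t}$ and $\G{t}$ terms preserves tridiagonality, and the first diagonal block of $\frac12\K{t}$ equals the right side of \eqref{eq_infmatA_recursive}, which is $\infmatA{t}$. The main obstacle is this bookkeeping: tracking exactly which summands land in the first time block in each case of $\Tmax$, and confirming that the Schur complement modifies only that block.

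Given the identification, the conclusion is immediate. The $\superth{i}\superth{j}$ agent block $\matrixBlock{\K{t}}{i}{j} = \M{i}\K{t}(\M{j})^\top$ is a $(\T{t}+1)\times(\T{t}+1)$ array of time blocks. Its top-left time block is $\B{i}\,(2\infmatA{t})\,(\B{j})^\top = 2\matrixBlock{\infmatA{t}}{i}{j}$. So if $\matrixBlock{\infmatA{t}}{i}{j}\neq \zero{}$, then $\matrixBlock{\K{t}}{i}{j}\neq\zero{}$, which gives $j\in\depsettir{t}{i}{\xest{}}$. Therefore $\depop{i}{\infmatA{t}}\subseteq\depsettir{t}{i}{\xest{}}$.
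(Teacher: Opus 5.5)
Your proposal is correct and follows essentially the same route as the paper. Both arguments identify $\infmatA{t}$ as the upper-left $n\times n$ time block of $\infmatest{\sliwin{t}} = \frac{1}{2}\K{t}$ by tracking the Schur-complement marginalization of the dropped state through the sliding-window recursion, and then conclude by sub-block containment. Your version is more explicit in three places: the block-tridiagonal induction, the $\Tmax = 1$ bookkeeping, and the Kronecker structure of $\M{i}$, which shows that a nonzero agent block of $\infmatA{t}$ forces a nonzero agent block of $\K{t}$. These are refinements of the paper's argument rather than a different approach.
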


\begin{proof}
    \appref{subsect_proposition_dep_synch}
\end{proof}

Proposition~\ref{proposition_dep_synch} shows that $\infmatA{t}$ is recursively calculated as a function of $\infmatA{t-1}^{-1}$ and all other model terms $\A{t-\T{}}$, $\Q{t-\T{}}$, $\C{t-\T{}}$, $\Rbf{t-\T{}}$, and initialized using $\Pestprior{0:0}$.
The presence of the inverse of $\infmatA{t-1}$ in this recursive definition generally does not maintain sparsity, 
in the sense that~$\alpha_{t-1}^{-1}$ may have many non-zero entries even when~$\alpha_{t-1}$ has few non-zero entries. 
In fact, $\infmatA{t}$ may be completely dense, i.e., it may have no zero entries. 
In that case, the dependency set $\depsettir{t}{i}{\xest{}}$ would equal the set of 
\ap{all agents $\seq{N}$,}
requiring that agent $i$ performs all-to-all communication.
To illustrate this phenomenon, we consider the following example.

\begin{example}
    \label{example_toy_synch}
    We consider an example with~$10$ agents, where $\T{}=5$, 
    $n_i = m_i = 2$ for all agents $i\in\seq{10}$,
    $\C{t} = \Q{t} = \Rbf{t} = \Pestprior{0:0} = \eye{20 \times 20}$,
    and $\A{t}\in\R{20\times 20}$ is 
    \begin{equation}
        \label{eq_example_A_bidiagonal}
        \A{t} = \begin{bmatrix}
            1 & 1 & \cdots & 0 \\
            0 & 1 & \ddots & \vdots \\
            \vdots & \ddots & \ddots & 1 \\
            0 & \cdots & 0 & 1
        \end{bmatrix}.
    \end{equation}
    In this example, $\A{t}$ is the only model term to present inter-agent dependencies because all other matrices are diagonal. 
    Simulating this problem shows that after only 6 time steps, the matrix $\infmatA{\T{}+1}$ from \eqref{eq_infmatA_recursive} is completely dense, with all 400 elements being nonzero.
    Because $\infmatA{t}$ appears in $\infmatest{t}$
    which determines agent communication dependencies,
    we can see that $\depsettir{t}{i}{\xest{}} = \seq{N}$,     
    and these agents will have to perform all-to-all communications to 
    perform their calculations.     
\end{example}


\section{Approximate Objective Formulation}
\label{sect_approx_obj_reform}
In this section, 
\ap{
we address the issue of all-to-all communication by introducing an 
approximate update 
that promotes sparse communication. 
}
\ap{For $t\geq 1$, we can calculate an approximate prior information matrix,}
denoted $\infmatapxprior{\psliwin{t}}$,
via 
\begin{equation}
    \label{eq_infmatapxprior_vibe}
    \infmatapxprior{\psliwin{t+1}} = \U{t} \infmatest{\sliwin{t}} \U{t}^\top.
\end{equation}
\ap{Using $\infmatapxprior{\psliwin{t+1}}$ admits a new approximate objective and solution.}
Following the same analysis in Section~\ref{sect_map_formulation},
we will calculate the estimate $\xapx{\sliwin{t}}$ by maximizing the posterior probability of the estimate over the current sliding window, which is $\x{\sliwin{t}}$, conditioned on the prior estimate $\xapxprior{\psliwin{t}}$ with prior information matrix $\infmatapxprior{\psliwin{t}}$ and the current measurement $\y{t}$.
We
apply \eqref{eq_state_transition_model} and \eqref{eq_measurement_model} to yield the approximate objective 
\begin{multline}
    \label{eq_obj_func_approx_norms}
    \Japx(\xapx{\sliwin{t}})  
    = \norm{\xapx{t} - \A{t-1} \xapx{t-1}}^2_{\Q{t-1}^{-1}}
    + \norm{\y{t} - \C{t} \xapx{t}}^2_{\Rinv{t}}
    \\
    + \norm{\xapx{\psliwin{t}} - \xapxprior{\psliwin{t}}}^2_{\infmatapxprior{\psliwin{t}}}.
\end{multline}

We define
\begin{equation}
    \label{eq_Wapx}
    \Wapx{t} = \text{blkdiag}(\Q{t-1}, \Rbf{t}, \infmatapxprior{\psliwin{t}}^{-1}),
\end{equation}
and~$\zapx{t} = [\zero{1\times n} \hspace{8pt} \y{t}^\top \hspace{8pt} \xapxprior{\psliwin{t}}^\top]^\top$. 
Then we can express \eqref{eq_obj_func_approx_norms} 
as
\begin{equation}
    \label{eq_obj_func_approx_quad}
    \Japx(\xapx{\sliwin{t}}) = \frac{1}{2} \xapx{\sliwin{t}}^\top \Kapx{t} \xapx{\sliwin{t}}
    + \fapx{t}^\top \xapx{\sliwin{t}} + \capx{t},
\end{equation}
where $\Kapx{t} = 2\Hbf{t}^\top \Wapx{t}^{-1} \Hbf{t}$, $\fapx{t} = -2\Hbf{t}^\top \Wapx{t}^{-1} \zapx{t}$, and $\capx{t} = \zapx{t}^\top \Wapx{t}^{-1} \zapx{t}$.

\begin{lemma}
    \label{lemma_asynch_analytical_solution}
    For $\Japx$ in \eqref{eq_obj_func_approx_quad},
    the minimizer is the estimate
        $\xapx{\sliwin{t}} = -\Kapx{t}^{-1} \fapx{t}$,
    with information matrix
    \begin{equation}
        \label{eq_infmatapx}
        \infmatapx{\sliwin{t}} = \frac{1}{2} \Kapx{t}.
    \end{equation}
\end{lemma}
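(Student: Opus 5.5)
The plan is to treat $\Japx$ in \eqref{eq_obj_func_approx_quad} as a quadratic in $\xapx{\sliwin{t}}$, show that $\Kapx{t}$ is symmetric positive definite, and then read off both the minimizer and the Gaussian posterior information matrix.

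\textbf{Step 1: positive definiteness of $\Kapx{t}$.} Symmetry of $\Kapx{t} = 2\Hbf{t}^\top \Wapx{t}^{-1}\Hbf{t}$ is immediate. For definiteness, I first need $\Wapx{t}\succ 0$. We have $\Q{t-1}\succ 0$ and $\Rbf{t}\succ 0$ by assumption, so it remains to show $\infmatapxprior{\psliwin{t}}\succ 0$. I would prove this by induction on $t$.
\begin{itemize}
\item The base case uses $\infmatestprior{0:0}=\Pestprior{0:0}^{-1}\succ 0$.
\item For the inductive step, assume $\infmatapxprior{\psliwin{t}} \succ 0$. Then $\infmatapx{\sliwin{t}} = \frac12\Kapx{t}\succ 0$ by the argument in Step~2 below.
\item Next, $\U{t}$ is a row selection of the identity and has full row rank. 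Hence $\U{t}\infmatapx{\sliwin{t}}\U{t}^\top\succ 0$, which gives $\infmatapxprior{\psliwin{t+1}}\succ 0$ via \eqref{eq_infmatapxprior_vibe}, with the approximate information used in place of the exact one.
\end{itemize}

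\textbf{Step 2: nullspace of $\Hbf{t}$.} Given $\Wapx{t}\succ0$, the identity $v^\top\Kapx{t}v = 2\norm{\Hbf{t}v}^2_{\Wapx{t}^{-1}}$ reduces definiteness to $\Hbf{t}$ having trivial nullspace. Suppose $\Hbf{t}v=0$ and write $v=[v_{\psliwin{t}}^\top\ v_t^\top]^\top$.
\begin{itemize}
\item The $\Pibf{t}$ block gives $v_{\psliwin{t}}=0$.
\item The $\F{t}$ block then gives $v_t - \A{t-1}v_{t-1} = v_t = 0$.
\end{itemize}
So $v=0$, and $\Kapx{t}\succ0$ is invertible.

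\textbf{Step 3: the minimizer.} $\Japx$ is strictly convex with gradient $\Kapx{t}\xapx{\sliwin{t}}+\fapx{t}$. The first-order condition then yields the unique minimizer $\xapx{\sliwin{t}}=-\Kapx{t}^{-1}\fapx{t}$.

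\textbf{Step 4: the information matrix.} The approximate posterior density is proportional to $\exp(-\frac12 \Japx)$, since each squared Mahalanobis norm in \eqref{eq_obj_func_approx_norms} is twice the negative log of a Gaussian factor. Completing the square gives
\[
\Japx = \tfrac12 (x-\xapx{\sliwin{t}})^\top\Kapx{t}(x-\xapx{\sliwin{t}}) + \text{const},
\]
so the density is Gaussian with inverse covariance $\frac12\Kapx{t}$. This is \eqref{eq_infmatapx}.

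The main obstacle is the factor-of-two bookkeeping between $\Japx$ and the negative log-density, which fixes the constant $\frac12$. The positivity induction is routine but must be stated carefully, because \eqref{eq_infmatapxprior_vibe} is written with the exact $\infmatest{\sliwin{t}}$ rather than the approximate one.
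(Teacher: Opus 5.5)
Your proof is correct, and its skeleton matches the paper's. Both arguments induct to get $\infmatapxprior{\psliwin{t}}\succ 0$ and hence $\Wapx{t}\succ 0$, use the trivial nullspace of $\Hbf{t}$ to get $\Kapx{t}\succ 0$, and apply the first-order condition for the minimizer.

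Your Steps 1--3 are more explicit versions of the paper's steps. You verify $\ker\Hbf{t}=\{0\}$ through the $\Pibf{t}$ and $\F{t}$ blocks, where the paper only asserts full column rank. You use the full row rank of $\U{t}$ where the paper cites Cauchy interlacing.

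The real difference is Step~4. The paper gets $\Papx{\sliwin{t}}=2\Kapx{t}^{-1}$ by writing the covariance as an expectation and ``substituting and simplifying''. As typeset, that formula is a product of expectations, evidently meant as the expectation of an outer product, and the paper never says which distribution the expectation is over. You identify the approximate posterior as proportional to $\exp(-\tfrac12\Japx)$ and complete the square. This fixes the factor $\tfrac12$ transparently and makes explicit that $\infmatapx{\sliwin{t}}$ is the precision of the approximate Gaussian model. That is the correct reading: for $t\geq\Tmax$ the submatrix update \eqref{eq_infmatapxprior} keeps the block $\infmatD{t}$ and drops the Schur-complement term $\infmatC{t}\infmatA{t}^{-1}\infmatB{t}$. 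As a result, $\tfrac12\Kapx{t}$ generally differs from the exact $\infmatest{\sliwin{t}}$ once $t>\Tmax$.

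On your closing caveat: under a literal reading of \eqref{eq_infmatapxprior_vibe}, no induction is needed. There $\U{t}\infmatest{\sliwin{t}}\U{t}^\top$ is a principal submatrix of $\infmatest{\sliwin{t}}=\tfrac12\K{t}\succ 0$. The induction is needed only for the update \eqref{eq_infmatapxprior}, which is also what the paper's proof uses.
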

\begin{proof}
    \appref{subsect_lemma_asynch_analytical_solution}
\end{proof}


After minimizing $\Japx$ at time $t$, the sliding window is advanced forward by one time step, and the problem is reinitialized at time $t+1$, which requires a new prior distribution $\xapxprior{\psliwin{t+1}}$, which is calculated via
    $\xapxprior{\psliwin{t+1}} = \U{t} \xapx{\sliwin{t}}$.
For computing the matrix~$\infmatapxprior{\psliwin{t+1}}$ we have the following.

\begin{lemma}
    \label{lemma_infmatapxprior_equivalent}
    Let $\infmatapxprior{\psliwin{t}}$ be initialized with $\infmatapxprior{\psliwin{1}} = \infmatapxprior{0:0}$. 
    Then, 
        $\U{t} \infmatest{\sliwin{t}} \U{t}^\top = \U{t} \infmatapx{\sliwin{t}} \U{t}^\top$.
\end{lemma}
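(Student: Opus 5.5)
The plan is to compare the two information matrices term by term using the factored form $\Hbf{t}^\top(\cdot)^{-1}\Hbf{t}$, and then argue by induction on $t$ that, whatever difference exists between them, it is annihilated by conjugation with $\U{t}$.

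\emph{Step 1 (factored forms).} By Lemma~\ref{lemma_asynch_analytical_solution} and \eqref{eq_Wapx},
\begin{equation}
\infmatapx{\sliwin{t}} = \tfrac12\Kapx{t} = \F{t}^\top\Qinv{t-1}\F{t} + \G{t}^\top\Rinv{t}\G{t} + \Pibf{t}^\top\infmatapxprior{\psliwin{t}}\Pibf{t}.
\end{equation}
The same argument applied to \eqref{eq_obj_func_true_quad} gives
\begin{equation}
\infmatest{\sliwin{t}} = \tfrac12\K{t} = \F{t}^\top\Qinv{t-1}\F{t} + \G{t}^\top\Rinv{t}\G{t} + \Pibf{t}^\top\mathbf{\Omega}^{\mathrm{pr}}_{t}\Pibf{t},
\end{equation}
where $\mathbf{\Omega}^{\mathrm{pr}}_{t}$ is the prior information matrix entering the time-$t$ window. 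The dynamics and measurement terms coincide, so
\begin{equation}
\infmatest{\sliwin{t}}-\infmatapx{\sliwin{t}} = \Pibf{t}^\top\bigl(\mathbf{\Omega}^{\mathrm{pr}}_{t}-\infmatapxprior{\psliwin{t}}\bigr)\Pibf{t}.
\end{equation}
The lemma is therefore equivalent to showing that $\U{t}\Pibf{t}^\top(\mathbf{\Omega}^{\mathrm{pr}}_{t}-\infmatapxprior{\psliwin{t}})\Pibf{t}\U{t}^\top = \zero{}$.

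\emph{Step 2 (base case).} At $t=1$ the initialization $\infmatapxprior{\psliwin{1}} = \infmatestprior{0:0} = \Pestprior{0:0}^{-1}$ makes the prior terms identical. Hence $\infmatest{\sliwin{1}}=\infmatapx{\sliwin{1}}$, and the claim holds trivially.

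\emph{Step 3 (inductive step).} Assume the identity holds at time $t$, so that $\infmatapxprior{\psliwin{t+1}} = \U{t}\infmatapx{\sliwin{t}}\U{t}^\top$. This is the prior information actually fed into the $(t+1)$-window in the approximate recursion. For $t<T$ we have $\U{t}=\eye{}$, so nothing is marginalized and the two prior informations agree exactly. For $t\ge T$, $\U{t}$ deletes only the first time block $t-\T{t}$, and $\Pibf{t+1}$ embeds the kept blocks into the first $n\T{t+1}$ coordinates. I would then track which index blocks the difference $\mathbf{\Omega}^{\mathrm{pr}}_{t+1}-\infmatapxprior{\psliwin{t+1}}$ can occupy. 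The goal is to show that, after multiplying by $\Pibf{t+1}$ and then by $\U{t+1}$, every surviving block is zero. This uses the fact that $\F{t+1}$ couples only blocks $t$ and $t+1$, and $\G{t+1}$ touches only block $t+1$.

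\emph{Main obstacle.} The difficulty is the Schur-complement correction that exact marginalization introduces, i.e. the relation $(\U{t}\Pest{\sliwin{t}}\U{t}^\top)^{-1}$ versus $\U{t}\infmatest{\sliwin{t}}\U{t}^\top$. This correction is generally dense on the kept blocks. I therefore expect the inductive step to work only under the reading of \eqref{eq_infmatapxprior_vibe} in which $\infmatest{\sliwin{t}}$ is the window posterior information computed from the propagated prior $\infmatapxprior{\psliwin{t}}$, rather than the exact marginal. Under that reading, Step 1 shows directly that $\infmatest{\sliwin{t}}$ and $\infmatapx{\sliwin{t}}$ share all three terms, so the identity follows immediately at each $t$. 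The first thing I would do is pin down which prior defines $\infmatest{\sliwin{t}}$ in \eqref{eq_infmatapxprior_vibe}. If it is the exact Schur-complement prior, I would instead try to show that the correction is confined to blocks that $\U{t}$ removes, and I expect this to fail in general.
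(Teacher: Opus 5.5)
Your Steps 1 and 2 are fine. There is a genuine gap, though: the proposal stops exactly where the content of the lemma lies, and the conclusion you draw there is wrong.

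In the paper, $\infmatest{\sliwin{t}} = \Pest{\sliwin{t}}^{-1}$ with the exact covariance propagation \eqref{eq_cov_prior_update}. So the prior entering window $t+1$ is the Schur complement $\infmatestprior{\psliwin{t+1}} = (\U{t}\infmatest{\sliwin{t}}^{-1}\U{t}^\top)^{-1}$. Your fallback reading builds $\infmatest{\sliwin{t}}$ from $\infmatapxprior{\psliwin{t}}$, which makes $\infmatest{\sliwin{t}}=\infmatapx{\sliwin{t}}$ by definition and proves nothing. Your claim that the Schur-complement correction is ``generally dense on the kept blocks'' is false. What can be dense is the single $n\times n$ block $\infmatA{t}$, dense across agents as in Example~\ref{example_toy_synch}. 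Across time, however, the correction touches only one block, and that block is the one $\U{}$ discards at the next step. So the reading you expected to fail is the one that works.

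The missing idea is the Markov (block-tridiagonal-in-time) structure. Strengthen the induction hypothesis to the following: $\infmatestprior{\psliwin{t}} - \infmatapxprior{\psliwin{t}} = \infpriorpos{\T{t}}^\top E_t\infpriorpos{\T{t}}$ for some $n\times n$ matrix $E_t$, with $E_t=\zero{n\times n}$ for $t\le T$ (no marginalization has happened yet), and both priors are block tridiagonal.

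By your Step 1, $\infmatest{\sliwin{t}}-\infmatapx{\sliwin{t}} = \infpriorpos{\T{t}+1}^\top E_t\infpriorpos{\T{t}+1}$. This lives only on the oldest block (time $t-T$), which $\U{t}$ deletes for $t\ge T$, so the lemma holds at time $t$.

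For the inductive step, partition $\infmatest{\sliwin{t}}$ as in the proof of Proposition~\ref{proposition_dep_synch}. Block tridiagonality together with the hypothesis gives $\infmatB{t} = \infmatC{t}^\top = -\A{t-T}^\top\Qinv{t-T}\infpriorpos{T}$, which is nonzero only in the first kept block column. Hence
\begin{equation}
\infmatestprior{\psliwin{t+1}} = \U{t}\infmatest{\sliwin{t}}\U{t}^\top - \infpriorpos{T}^\top\Qinv{t-T}\A{t-T}\infmatA{t}^{-1}\A{t-T}^\top\Qinv{t-T}\infpriorpos{T}.
\end{equation}
The first term equals $\infmatapxprior{\psliwin{t+1}}$ by the identity just established at time $t$ together with \eqref{eq_infmatapxprior}. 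Thus the difference again sits only on the block for time $t-T+1$, block tridiagonality is preserved, and the induction closes.

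This is what the paper uses, via \eqref{eq_schurproduct} and the observation that $\infmatB{t}$, $\infmatC{t}$, $\infmatD{t}$ keep their form. It then simply matches the closed forms of $\U{t}\infmatest{\sliwin{t}}\U{t}^\top$ and $\U{t}\infmatapx{\sliwin{t}}\U{t}^\top$.
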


\begin{proof}
    \appref{subsect_lemma_infmatapxprior_equivalent}
\end{proof}

Lemma~\ref{lemma_infmatapxprior_equivalent} shows that the update
\begin{equation}
    \label{eq_infmatapxprior}
    \infmatapxprior{\psliwin{t+1}} = \U{t} \infmatapx{\sliwin{t}} \U{t}^\top
\end{equation}
can be used in place of~\eqref{eq_infmatapxprior_vibe}. 
\ap{Intuitively,}
rather than propagating a submatrix of the covariance $\Pest{\sliwin{t}}$ to compute the next prior covariance $\Pestprior{\psliwin{t+1}}$ as in \eqref{eq_cov_prior_update}, 
the update law in \eqref{eq_infmatapxprior} propagates a submatrix of the information matrix $\infmatapx{\sliwin{t}}$ to compute the next prior information matrix $\infmatapxprior{\psliwin{t+1}}$.

This approximate update avoids propagating $\infmatA{t}$ as presented in Proposition~\ref{proposition_dep_synch} to maintain sparsity.
To illustrate this difference, we revisit Example~\ref{example_toy_synch}.
\begin{example}[Example~\ref{example_toy_synch} Revisited]
    \label{example_toy_asynch}
    We consider Example~\ref{example_toy_synch} again,
    with 10 agents where $\T{}=5$, 
    $n_i = m_i = 2$ for all agents $i\in\seq{10}$,
    $\C{t} = \Q{t} = \Rbf{t} = \Pestprior{0:0} = \eye{20 \times 20}$,
    and $\A{t}\in\R{20\times 20}$ is 
    given by \eqref{eq_example_A_bidiagonal}.
    Again, $\A{t}$ is the only model term to present inter-agent dependencies.

    Simulating this example for $t\in\{1,\dots,10\}$ shows that 
    the resulting communication topology is an undirected line graph, such that agent 1 communicates with agent 2, agent 10 communicates with agent 9, and for $i\in\{2,\dots,9\}$, agent $i$ communicates with both agent $i-1$ and agent $i+1$.    
\end{example}


 Example~\ref{example_toy_synch} requires~$O(N^2)$ communication edges, while
 Example~\ref{example_toy_asynch} requires~$O(N)$. 
 We show in Section~\ref{sect_results} that this reduction in communications
 introduces only negligible errors into state estimates, while
 reducing the number of required communication edges by as much as~$96.9$\%.

\section{Asynchronous Algorithm}
\label{sect_asynch_algorithm}

In this section, we 
use the sparse communications induced by the approximate prior 
information matrix 
in \eqref{eq_infmatapxprior},
and we allow agents to compute and communicate asynchronously.




\subsection{Revised Block Coordinate Descent}
We next present an asynchronous 
BCD algorithm by building on the operations presented in Section~\ref{sect_synchronous_algorithm}. 
All $N$ agents collaborate to estimate $\xapx{\sliwin{t}}$ with each agent $i$ estimating $\xapxb{\sliwin{t}}{i}$ and communicating with other agents.
For each $i\in\seq{N}$, agent $i$ stores three quantities onboard:
(i) $\yo{t}{i}$, which is agent $i$'s local copy of the measurements of the target system at time $t$,
(ii) $\xapxoi{\sliwin{t}}{i}{k}$, which is agent $i$'s local copy of the trajectory estimate $\xapx{\sliwin{t}}$ at iteration $k$, and
(iii) $\xapxprioro{\psliwin{t}}{i}$, which is agent $i$'s local copy of the prior $\xapxprior{\psliwin{t}}$\ap{, requiring a total memory size of $n(2T+1)+m$ real numbers.}

We define
\begin{equation} \label{eq_coeffmatxapxprior}
    \coeffmatxapxprior{t} = -2\Pibf{t}^\top \infmatapxprior{\psliwin{t}},
\end{equation}
which allows us to define the dependency sets
\begin{multline}
    \label{eq_depsets_asynch}
    \depsetapxtir{t}{i}{\A{}} = \depop{i}{\priorpropmat{t}}, \quad
    \depsetapxtir{t}{i}{\xapx{}} = \depop{i}{\Kapx{t}}, \\
    \depsetapxtir{t}{i}{\y{}} = \depop{i}{\coeffmaty{t}}, \quad
    \depsetapxtir{t}{i}{\xapxprior{}} = \depop{i}{\coeffmatxapxprior{t}},
\end{multline}
where $\priorpropmat{t}$, $\Kapx{t}$, and $\coeffmaty{t}$ are given by \eqref{eq_alg_prior_propagate}, \eqref{eq_obj_func_approx_quad}, and \eqref{eq_coeffmaty}, respectively.
\ap{By definition, these sets fully describe how inter-robot dependencies in the dynamics and measurement models induce dependencies between agents when executing Algorithm~\ref{alg_bcd_asynch}.}
We now compute the equivalent updates from Section~\ref{sect_synchronous_algorithm} using minimal communications.

Similar to \eqref{eq_alg_prior_propagate}, agent $i$ for $i\in\seq{N}$ initializes the 
estimate $\xapxboi{\sliwin{t}}{i}{i}{0}$ using the prior estimate $\xapxprioro{\psliwin{t}}{i}$ 
according to
\begin{equation}
    \label{eq_xapx_init_minsum}
    \xapxboi{\sliwin{t}}{i}{i}{0} = \sum_{j=\depsetapxtir{t}{i}{\A{}}} \matrixBlock{\priorpropmat{t}}{i}{j} \xapxpriorbo{\psliwin{t}}{j}{i}.
\end{equation}





After initializing their estimates, agents will asynchronously compute updates to $\xapxbo{\sliwin{t}}{i}{i}$ and asynchronously communicate with other agents until a stopping criterion is met.
We define $\updateIterSet{t}{i}$ as the set of all iteration indices for which agent $i$ computes an update to their block.
Since $\updateIterSet{t}{i}$ is a purely analytical tool, agents do not need to know $\updateIterSet{t}{i}$.
When agent $i$ computes an update to $\xapxbo{\sliwin{t}}{i}{i}$ at iteration $k\in\updateIterSet{t}{i}$, the updated quantity is denoted by $\xUpdate{t}{i}{k} \in \R{n_i (\T{t}+1)}$ and calculated with step size $\step$ via
\begin{equation}
    \label{eq_bcd_update_asynch_gradient}
    \xUpdate{t}{i}{k}
    = \xapxboi{\sliwin{t}}{i}{i}{k}
    - \step \cdot \gradb{i} \Japx(\xapxoi{\sliwin{t}}{i}{k}).
\end{equation}
We calculate
$\nabla \Japx(\xapx{\sliwin{t}}) = \Kapx{t} \xapx{\sliwin{t}} + \coeffmaty{t} \y{t} + \coeffmatxapxprior{t} \xapxprior{\psliwin{t}}$,
allowing us to express \eqref{eq_bcd_update_asynch_gradient} as
\begin{multline}
    \label{eq_alg_bcd_update_asynch_minsum}
    \xUpdate{t}{i}{k}
    = \xapxboi{\sliwin{t}}{i}{i}{k}
    - \eta \sum_{j\in\depsetapxtir{t}{i}{\xapx{}}} \matrixBlock{\Kapx{t}}{i}{j} \xapxboi{\sliwin{t}}{j}{i}{k}
    \\
    - \eta \sum_{j\in\depsetapxtir{t}{i}{\y{}}} \matrixBlock{\coeffmaty{t}}{i}{j} \ybo{t}{j}{i}
    - \eta \sum_{j\in\depsetapxtir{t}{i}{\xapxprior{}}} \matrixBlock{\coeffmatxapxprior{t}}{i}{j} \xapxpriorbo{\psliwin{t}}{j}{i}.
\end{multline}


Due to asynchrony, agent $i$ may use outdated information from agent $j$ 
to compute its own block update. 
We define 
$\updateIterRef{t}{j}{i}{k}$ as the
iteration at which agent $j$ originally computed the value of $\xapxbo{\sliwin{t}}{j}{i}$.
That is,
$\xapxboi{\sliwin{t}}{j}{i}{k} = \xapxboi{\sliwin{t}}{j}{j}{\updateIterRef{t}{j}{i}{k}}$.

Using this notation, we can define agent $i$'s update law as 
\begin{align}
    \label{eq_alg_bcd_asynch}
    \xapxboi{\sliwin{t}}{i}{i}{k+1} &= \begin{cases}
        \xUpdate{t}{i}{k} & k \in \updateIterSet{t}{i} 
        \\
        \xapxboi{\sliwin{t}}{i}{i}{k} & k \notin \updateIterSet{t}{i}
    \end{cases}, \nonumber
    \\
    \xapxboi{\sliwin{t}}{j}{i}{k+1} &= \begin{cases}
        \xapxboi{\sliwin{t}}{j}{j}{\updateIterRef{t}{j}{i}{k+1}} & 
        \begin{matrix}
            \text{if } i \text{ receives } \xapxbo{\sliwin{t}}{j}{j} \\ \text{ at iteration } k+1
        \end{matrix}
        \\
        \xapxboi{\sliwin{t}}{j}{i}{k} & \text{otherwise}
    \end{cases}\hspace{-4pt}. \nonumber
\end{align}

Because $\xapxbo{\sliwin{t}}{j}{i}$ is only updated whenever agent $i$ receives new values of $\xapxbo{\sliwin{t}}{j}{j}$ from agent $j$,
old values of $\xapxbo{\sliwin{t}}{j}{i}$ are held constant and potentially reused for many iterations.
\ap{After completing $\maxIter$ iterations,}
agent $i$ calculates their next prior estimate, $\xapxpriorbo{\psliwin{t+1}}{i}{i}$, using their final estimate $\xapxboi{\sliwin{t}}{i}{i}{k}$,
according to
\begin{equation}
    \label{eq_alg_asynch_prior_update}
    \xapxpriorbo{\psliwin{t+1}}{i}{i} = \matrixBlock{\U{t}}{i}{i} \xapxboi{\sliwin{t}}{i}{i}{k}.
\end{equation}

To compute \eqref{eq_xapx_init_minsum} and \eqref{eq_alg_bcd_update_asynch_minsum}, 
agent $i$ must receive $\xapxpriorbo{\psliwin{t}}{j}{i}$ from agents with indices $j\in\depsetapxtir{t}{i}{\A{}} \cup \depsetapxtir{t}{i}{\xapxprior{}}$, the measurement $\ybo{t}{j}{i}$ from agents with indices $j\in\depsetapxtir{t}{i}{\y{}}$, and $\xapxboi{\sliwin{t}}{j}{i}{k}$ from agents with indices $j\in\depsetapxtir{t}{i}{\xapx{}}$.
The next lemma uses the sets
\begin{align}    
    \depsetapxZti{t}{i} &= \begin{cases}
        \depop{i}{\infmatapxprior{0:0}} & \text{if } t\leq T \\
        \begin{pmatrix}
            \depop{i}{\Qinv{t-\T{t}-1}} \cup \\ \depop{i}{\C{t-\T{t}}^\top \Rinv{t-\T{t}} \C{t-\T{t}}}
        \end{pmatrix}
         & \text{if } t>T
    \end{cases} \label{eq_def_depsetapxZti} \\    
    \depsetapxtiq{t}{i}{q} &= \depop{i}{\Qinv{q}} \cup 
    \depop{i}{\Qinv{q} \A{q}} \cup
    \depop{i}{\A{q}^\top \Qinv{q}}
    \\
    &\cup\depop{i}{\A{q}^\top \Qinv{q} \A{q}} \cup
    \depop{i}{\C{q+1}^\top \Rinv{q+1} \C{q+1}}\!. \label{eq_def_depsetapxtiq}
\end{align}

\begin{lemma}
    \label{lemma_depsetapx}
    Given the definitions of $\depsetapxZti{t}{i}$ and $\depsetapxtiq{t}{i}{q}$ from \eqref{eq_def_depsetapxZti} and \eqref{eq_def_depsetapxtiq},
    the dependency sets $\depsetapxtir{t}{i}{\A{}}$, $\depsetapxtir{t}{i}{\xapx{}}$, $\depsetapxtir{t}{i}{\y{}}$, and $\depsetapxtir{t}{i}{\xapxprior{}}$ from \eqref{eq_depsets_asynch} can be expressed using the terms $\A{t}$, $\Q{t}$, $\C{t}$, $\Rbf{t}$, and the initial prior information matrix $\infmatapxprior{0:0}$, via
    $\depsetapxtir{t}{i}{\A{}} = i \cup \depop{i}{\A{t-1}}$,
    $\depsetapxtir{t}{i}{\xapx{}} = \depsetapxZti{t}{i} \cup \bigcup_{q=t-\T{t}}^{t-1} \depsetapxtiq{t}{i}{q}$,
    $\depsetapxtir{t}{i}{\y{}} = \depop{i}{\C{t}^\top \Rinv{t}}$, and
    $\depsetapxtir{t}{i}{\xapxprior{}} = \depsetapxZti{t}{i} \cup \bigcup_{q=t-\T{t}}^{t-2} \depsetapxtiq{t}{i}{q}$.
\end{lemma}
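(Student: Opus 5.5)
The plan is to compute the time-block structure of $\Kapx{t}$, $\coeffmaty{t}$, $\coeffmatxapxprior{t}$ and $\priorpropmat{t}$ explicitly, and then read off $\depop{i}{\cdot}$ from each time block. Since $\M{i}=\eye{}\otimes\B{i}$ selects agent $i$'s components in every time slot, $\depop{i}{\cdot}$ of a matrix partitioned into $n\times n$ time blocks equals the union, over all time blocks in the relevant rows, of $\depop{i}{\cdot}$ of each block.

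\textbf{The three easy sets.} These follow directly from the definitions.
\begin{itemize}
\item $\priorpropmat{t}$ consists of an identity on the prior slots (contributing $\{i\}$) and the row $[\zero{}\ \A{t-1}]$ for the last slot (contributing $\depop{i}{\A{t-1}}$). Hence $\depsetapxtir{t}{i}{\A{}}=i\cup\depop{i}{\A{t-1}}$.
\item $\coeffmaty{t}=-2\G{t}^\top\Rinv{t}$ is nonzero only in the last time slot, where it equals $-2\C{t}^\top\Rinv{t}$. This gives $\depsetapxtir{t}{i}{\y{}}=\depop{i}{\C{t}^\top\Rinv{t}}$.
\end{itemize}

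\textbf{Main step: structure of $\infmatapx{\sliwin{t}}$.} By Lemma~\ref{lemma_asynch_analytical_solution}, $\infmatapx{\sliwin{t}}=\tfrac12\Kapx{t}$. Expanding $\Hbf{t}^\top\Wapx{t}^{-1}\Hbf{t}$ gives
\[
\infmatapx{\sliwin{t}}=\F{t}^\top\Qinv{t-1}\F{t}+\G{t}^\top\Rinv{t}\G{t}+\Pibf{t}^\top\infmatapxprior{\psliwin{t}}\Pibf{t}.
\]
Combined with \eqref{eq_infmatapxprior}, $\infmatapxprior{\psliwin{t}}=\U{t-1}\infmatapx{\sliwin{t-1}}\U{t-1}^\top$, this is a closed recursion. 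I will prove by induction on $t$ that $\infmatapx{\sliwin{t}}$ is block tridiagonal in time, with the following blocks (indexing time slots by $s\in\sliwin{t}$):
\begin{itemize}
\item Off-diagonal blocks at $(s+1,s)$ and $(s,s+1)$: $-\Qinv{s}\A{s}$ and $-\A{s}^\top\Qinv{s}$.
\item Interior diagonal blocks $t-\T{t}<s<t$: $\Qinv{s-1}+\A{s}^\top\Qinv{s}\A{s}+\C{s}^\top\Rinv{s}\C{s}$.
\item Last diagonal block: $\Qinv{t-1}+\C{t}^\top\Rinv{t}\C{t}$.
\item First diagonal block: for $t\le\Tmax$, $\infmatapxprior{0:0}+\A{0}^\top\Qinv{0}\A{0}$; for $t>\Tmax$, $\Qinv{t-\T{t}-1}+\C{t-\T{t}}^\top\Rinv{t-\T{t}}\C{t-\T{t}}+\A{t-\T{t}}^\top\Qinv{t-\T{t}}\A{t-\T{t}}$.
\end{itemize}

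For the induction step, observe two things. First, $\U{t-1}$ either keeps all slots (when $t\le\Tmax$) or deletes the oldest slot. Deleting that slot removes its row and column but leaves the former slot-$(t-\T{t})$ diagonal block intact; in particular, the $\Qinv{t-\T{t}-1}$ term remains even though $\A{t-\T{t}-1}$ coupling disappears. This is exactly where the sparsity gain over Proposition~\ref{proposition_dep_synch} comes from: there is no Schur complement. Second, the new $\F{t}^\top\Qinv{t-1}\F{t}$ and $\G{t}^\top\Rinv{t}\G{t}$ terms add $\A{t-1}^\top\Qinv{t-1}\A{t-1}$ to the old last block, which thereby becomes interior, and create the new last slot and off-diagonal couplings.

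Taking the union of $\depop{i}{\cdot}$ over all these blocks, grouped by index $q$, yields $\depsetapxZti{t}{i}\cup\bigcup_{q=t-\T{t}}^{t-1}\depsetapxtiq{t}{i}{q}$. Each $q$ collects $\Qinv{q}$, $\Qinv{q}\A{q}$, $\A{q}^\top\Qinv{q}$, $\A{q}^\top\Qinv{q}\A{q}$ and $\C{q+1}^\top\Rinv{q+1}\C{q+1}$, while the leftover first-slot terms form $\depsetapxZti{t}{i}$.

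\textbf{The prior set.} $\coeffmatxapxprior{t}=-2\Pibf{t}^\top\infmatapxprior{\psliwin{t}}$ is simply the truncation $\U{t-1}\infmatapx{\sliwin{t-1}}\U{t-1}^\top$ placed in the first $\T{t}$ slots. Reusing the structure above at time $t-1$, but after truncation, its terms involve only $q\le t-2$ together with the same first-slot set. This gives $\depsetapxtir{t}{i}{\xapxprior{}}=\depsetapxZti{t}{i}\cup\bigcup_{q=t-\T{t}}^{t-2}\depsetapxtiq{t}{i}{q}$.

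\textbf{Expected obstacle.} The main difficulty is twofold. One part is the bookkeeping of the first slot across the transition $t\le\Tmax$ to $t>\Tmax$ (including the edge case $\T{t}=1$). The other, more subtle, is that $\depop{i}{\cdot}$ of a sum is in general only contained in the union of the $\depop{i}{\cdot}$ of its summands, so exact cancellation could in principle make the set smaller. The "$\subseteq$" direction is rigorous. For equality I would either note that the diagonal time blocks are sums of positive semidefinite terms, which rules out cancellation on the agent-diagonal blocks, or state that equality is meant generically, i.e., in the absence of exact numerical cancellation among the summed model terms.
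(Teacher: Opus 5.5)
Your proposal is correct and follows the paper's route: the paper likewise unrolls the recursion \eqref{eq_infmatapxprior} (with no Schur complement) into the block-tridiagonal closed form of $\infmatapx{\sliwin{t}}$ and of its truncation $\infmatapxprior{\psliwin{t}}$ (Lemma~\ref{lemma_infmatapx_closed_form}), and then reads off $\depop{i}{\cdot}$ block by block, grouping terms by $q$ exactly as you do. Your cancellation caveat is genuine and goes beyond the paper, which asserts equality without comment: in general only the inclusion $\subseteq$ holds (your positive-semidefiniteness remark only covers the trivial index $j=i$), so the stated equalities hold only absent exact cancellation among the summed model terms.
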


\begin{proof}
    \appref{subsect_lemma_depsetapx}
\end{proof}

This algorithm is summarized in Algorithm~\ref{alg_bcd_asynch}.

\begin{algorithm}
    \caption{Asynchronous BCD}
    \label{alg_bcd_asynch}
    \begin{algorithmic}[1]
        \State $\xapxprioro{0:0}{i} \gets \xapxprior{0:0}$ for all $i\in\seq{N}$
        
        \For{time step $t \in \{1,2,\dots\}$}
            \For{agent $i \in \seq{N}$}
                \Comment{initialization}
                \State Measure $\ybo{t}{i}{i}$
                \State Broadcast $\ybo{t}{i}{i}$ and $\xapxpriorbo{\psliwin{t}}{i}{i}$
                \State Receive $\ybo{t}{j}{i} \gets \ybo{t}{j}{j}$ for all $j \in \depsetapxtir{t}{i}{\y{}}$
                \State Receive $\xapxpriorbo{\psliwin{t}}{j}{i} \gets \xapxpriorbo{\psliwin{t}}{j}{j}$ for all $j \in \depsetapxtir{t}{i}{\A{}}\cup\depsetapxtir{t}{i}{\xapxprior{}}$
                \State Calculate $\xapxboi{\sliwin{t}}{i}{i}{0} \gets$ using~\eqref{eq_xapx_init_minsum} 
            \EndFor
                
            \For{iteration $k \in \{0,\dots, \maxIter-1\}$}
                \Comment{main loop}
                \For{agent $i \in \seq{N}$}
                    \State Broadcast $\xapxboi{\sliwin{t}}{i}{i}{k}$
                    \For{agent $j \in \depsetapxtir{t}{i}{\xapx{}}$}
                        \If{$i$ receives $\xapxbo{\sliwin{t}}{j}{j}$}
                            \State $\xapxboi{\sliwin{t}}{j}{i}{k+1} \gets \xapxboi{\sliwin{t}}{j}{j}{\updateIterRef{t}{j}{i}{k+1}}$
                        \Else
                            \State $\xapxboi{\sliwin{t}}{j}{i}{k+1} \gets \xapxboi{\sliwin{t}}{j}{i}{k}$
                        \EndIf
                    \EndFor
                    \If{$k \in \updateIterSet{t}{i}$}
                        \State $\xapxboi{\sliwin{t}}{i}{i}{k+1} \gets \xUpdate{t}{i}{k}$
                        using~\eqref{eq_alg_bcd_update_asynch_minsum}
                    \Else
                        \State $\xapxboi{\sliwin{t}}{i}{i}{k+1} \gets \xapxboi{\sliwin{t}}{i}{i}{k}$
                    \EndIf
                \EndFor
            \EndFor

            \For{agent $i \in \seq{N}$}
                \Comment{finalization}
                \State Calculate $\xapxpriorbo{\psliwin{t+1}}{i}{i} \gets$ using~\eqref{eq_alg_asynch_prior_update}
            \EndFor
        \EndFor
    \end{algorithmic}
\end{algorithm}

\section{Convergence}
\label{sect_convergence}

As described above, we use a global clock indexed by $k$ to 
track when agents' calculations occur and how communication delays induce outdated information.
The value of~$k$ is used only for analysis and does not need to be known by agents. 
Because agents operate with outdated information, 
we may have $\xapxoi{\sliwin{t}}{i}{k}\neq\xapxoi{\sliwin{t}}{j}{k}$ for all time steps $t$ and iterations $k$.
The following two assumptions are standard in the partially asynchronous optimization literature~\cite{tseng1991rate},
and they will be used to bound agents' disagreements. 

\begin{assumption}
    Let $\updateIterSet{t}{i}$ denote the set of iteration indices for which agent $i$ computes an update to $\xapxbo{\sliwin{t}}{i}{i}$.
    For all iteration indices $k \geq 0$ and for some positive integer $B$,
    $\{k, k+1, \dots, k+B-1\} \cap \updateIterSet{t}{i} \neq \emptyset, 
        \text{ for all } i \in \seq{N}$.
\end{assumption}

This assumption states that agent $i$ will perform a computation to update $\xapxbo{\sliwin{t}}{i}{i}$ at least once every $B$ iterations.

\begin{assumption}
    Let $\updateIterRef{t}{j}{i}{k}$ denote the iteration at which agent $j$ originally computed the value of $\xapxbo{\sliwin{t}}{j}{i}$, 
    such that $\xapxboi{\sliwin{t}}{j}{i}{k} = \xapxboi{\sliwin{t}}{j}{j}{\updateIterRef{t}{j}{i}{k}}$. 
    For some positive integer $B$,
        $0 \leq k - \updateIterRef{t}{j}{i}{k} \leq B - 1.$
    
\end{assumption}

This assumption states that agent $i$'s local copy
of agent $j$'s decision variables, namely
$\xapxbo{\sliwin{t}}{j}{i}$,
is no more than $B$ iterations out of date.
The terms~$\updateIterSet{t}{i}$ 
and~$\cupdateIterRef{t}{j}{i}$
for all~$i$, $j$, and~$t$ are used only for analysis
and
are not known by the agents.


To derive a convergence rate, we first present a few analytical tools.
We define 
\begin{equation}
\xtotal{\sliwin{t}} = \big(\xapxbo{\sliwin{t}}{1}{1}, \xapxbo{\sliwin{t}}{2}{2}, \ldots, \xapxbo{\sliwin{t}}{N}{N}\big)^T \in\R{n(\T{t}+1)}
\end{equation}
as the ``true'' global estimate, defined as the concatenation of agent $i$'s most recent estimate of $\xapxbo{\sliwin{t}}{i}{i}$ for all $i\in\seq{N}$.

We define the sub-optimality between the current estimate $\xtotal{\sliwin{t}}$ and the global minimizer $\x{\sliwin{t}}^*$ as
    $\errorSubopt{t}{k} = \Japx(\xtotali{\sliwin{t}}{k}) - \Japx(\x{\sliwin{t}}^*)$.
As our algorithm converges to the global minimizer, we want $\errorSubopt{t}{k}$
to converge to~$0$.

We now present the primary result of this work. 

\begin{theorem}
    \label{theorem_convergence_rate}
    There exists a scalar $\step_1>0$ such that if $0 < \step < \step_1$, 
    then the sequence of estimates $\{\xtotal{\sliwin{t}}\}$ produced by Algorithm~\ref{alg_bcd_asynch}
    converges at least exponentially to the unique minimizer $\x{\sliwin{t}}^*$ with a $B$-step convergence ratio of $\rho = 1-c\step$, where $c>0$ is some scalar constant.
    That is, for some scalar $a_t>0$, we have
        $\errorSubopt{t}{rB} \leq a_t \rho^{r-1}$.
\end{theorem}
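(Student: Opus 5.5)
The plan is to reduce Algorithm~\ref{alg_bcd_asynch}, for a fixed time step $t$, to the partially asynchronous gradient projection framework of Tseng~\cite{tseng1991rate}. There the feasible set is all of $\R{n(\T{t}+1)}$, so the projection is the identity. The result we invoke is the following. Suppose the objective is bounded below and has a Lipschitz gradient, the error bound (Luo--Tseng) condition holds, and Assumptions~1--2 hold with constant $B$. Then there is a threshold $\step_1>0$ such that for $0<\step<\step_1$ the cost gap $\errorSubopt{t}{k}$ contracts linearly every $B$ iterations, with ratio $1-c\step$. We therefore verify each hypothesis for $\Japx$ at a fixed $t$, and then unroll the $B$-step contraction.

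\textbf{Step 1: structural properties of $\Japx$.} By \eqref{eq_obj_func_approx_quad}, $\Japx$ is a quadratic with Hessian $\Kapx{t}=2\Hbf{t}^\top\Wapx{t}^{-1}\Hbf{t}$. We show $\Kapx{t}\succ0$.
\begin{itemize}
\item First, $\Wapx{t}^{-1}=\text{blkdiag}(\Q{t-1}^{-1},\Rinv{t},\infmatapxprior{\psliwin{t}})$ is positive definite. The first two blocks are positive definite by assumption. For the third, we argue by induction on $t$ using \eqref{eq_infmatapxprior} and \eqref{eq_infmatapx}. The base case is $\infmatapxprior{0:0}\succ0$. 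For the inductive step, if $\infmatapxprior{\psliwin{t}}\succ0$ then $\Kapx{t}\succ0$ (shown next), and $\U{t}$ has full row rank, so $\U{t}\tfrac12\Kapx{t}\U{t}^\top\succ0$.
\item Second, $\Hbf{t}$ has full column rank. The block $\Pibf{t}$ picks out the first $\T{t}$ time blocks, and $\F{t}$ contains $\eye{n\times n}$ on the last block. Hence $\Hbf{t}x=0$ forces $x=0$.
\end{itemize}
Thus $\Kapx{t}\succ0$, so $\Japx$ is strongly convex with modulus $\lambda_{\min}(\Kapx{t})$ and $L$-smooth with $L=\lambda_{\max}(\Kapx{t})$. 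The minimizer $\x{\sliwin{t}}^*=-\Kapx{t}^{-1}\fapx{t}$ is unique by Lemma~\ref{lemma_asynch_analytical_solution}, and $\Japx$ is bounded below. Strong convexity also yields the error bound $\norm{x-\x{\sliwin{t}}^*}\le \lambda_{\min}(\Kapx{t})^{-1}\norm{\nabla\Japx(x)}$. The isocost-separation condition is trivial because the minimizer is unique.

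\textbf{Step 2: matching the algorithm to the framework.} The update \eqref{eq_alg_bcd_update_asynch_minsum} equals \eqref{eq_bcd_update_asynch_gradient}, a block gradient step evaluated at agent $i$'s local copy. That copy equals $\xtotal{\sliwin{t}}$ with block $j$ replaced by its value at iteration $\updateIterRef{t}{j}{i}{k}$. Restricting the sums to the dependency sets of Lemma~\ref{lemma_depsetapx} does not change the value, since the omitted blocks of $\Kapx{t}$ are zero. The iteration-set condition is Assumption~1 and the bounded-delay condition is Assumption~2. Under these, the analysis yields a descent inequality of the form
\[
\Japx(\xtotali{\sliwin{t}}{k+1})\le \Japx(\xtotali{\sliwin{t}}{k}) - \Big(\tfrac1\step - \tfrac{L}{2}\Big)\norm{\xtotali{\sliwin{t}}{k+1}-\xtotali{\sliwin{t}}{k}}^2 + L\sqrt{N}\sum_{s=k-B+1}^{k}\norm{\xtotali{\sliwin{t}}{s+1}-\xtotali{\sliwin{t}}{s}}^2 .
\]
Summing this over a window of $B$ iterations and taking $\step$ small enough, with the threshold depending on $L$, $B$, and $N$, gives net decrease. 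Combining the decrease with the error bound and the quadratic upper bound $\errorSubopt{t}{k}\le \tfrac{L}{2}\norm{x-\x{\sliwin{t}}^*}^2$ gives
\[
\errorSubopt{t}{(r+1)B}\le(1-c\step)\,\errorSubopt{t}{rB}.
\]

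\textbf{Step 3: unrolling and the main obstacle.} Iterating the contraction from $r=1$ gives $\errorSubopt{t}{rB}\le \errorSubopt{t}{B}\,\rho^{r-1}$. We set $a_t=\max\{\errorSubopt{t}{B},\epsilon\}>0$, which is finite because the initialization \eqref{eq_xapx_init_minsum} is a finite vector. Exponential convergence of $\xtotal{\sliwin{t}}$ itself then follows from strong convexity.

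The main obstacle is Step 2: controlling the cross-terms created by stale blocks. In particular, the bound must be uniform enough that $\step_1$ and $c$ are genuine constants and do not degrade with $t$. The first approach is to cite Tseng's Theorem directly, with constants depending on $L_t$ and $\lambda_{\min}(\Kapx{t})$, and accept $t$-dependence if it arises. If uniformity is needed, I would try to bound the eigenvalues of $\Kapx{t}$ uniformly in $t$, assuming uniform bounds on $\A{t}$, $\Q{t}$, $\C{t}$, and $\Rbf{t}$ and using the recursion \eqref{eq_infmatapxprior}.
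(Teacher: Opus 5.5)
Your proposal follows the paper's proof: fix $t$, verify that $\Japx$ is a bounded-below quadratic with $\Kapx{t}\succ 0$, a Lipschitz gradient ($L=\lambda_{\max}(\Kapx{t})$), a global error bound with constant $1/\lambda_{\min}(\Kapx{t})$, and a unique minimizer (so isocost separation and the paper's $\hat{k}=0$ observation are automatic), check that Assumptions~1--2 are Tseng's partial-asynchrony conditions, and invoke Proposition~2.2 of~\cite{tseng1991rate}. One caution: summing your descent inequality over a window leaves positive step terms from the previous window, so the $Q$-linear claim $\errorSubopt{t}{(r+1)B}\le(1-c\step)\errorSubopt{t}{rB}$, and hence your choice $a_t=\max\{\errorSubopt{t}{B},\epsilon\}$, does not follow as written; take the bound $\errorSubopt{t}{rB}\le a_t\rho^{r-1}$, with whatever constant $a_t$ Tseng's result supplies, directly from the citation, as the paper does.
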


\begin{proof}
    \appref{subsect_theorem_convergence_rate}
\end{proof}

This theorem shows that completing $B$ iterations of Algorithm~\ref{alg_bcd_asynch} reduces the sub-optimality 
$\cerrorSubopt{t}$ by a factor of $\rho$. This result
validates 
our approach for distributed estimation, demonstrating that even under asynchronous computations and communications, 
the true estimate $\xtotal{\sliwin{t}}$ exponentially approaches the global minimizer $\x{\sliwin{t}}^*$.

\section{Results}
\label{sect_results}

We now validate Algorithm~\ref{alg_bcd_asynch} in simulation and on hardware, and
we compare to the DRWT algorithm from~\cite{shorinwa2020distributed}. 

\subsection{Performance Comparison in Simulation} 
\label{sect_results_simulation}
We simulated $N \in \{4, 8, 16, 32, 64, 128\}$ agents estimating the trajectory of $N$ holonomic robots in $\R{2}$. 
The dynamics of the target system are
    $\x{t+1} = \eye{2N}\x{t} + \ubf_t + \procNoise{t}$,
where $\ubf_t = -\mathbf{E}\x{t}$, the matrix
$\mathbf{E}\in\R{2N\times 2N}$ is produced by an LQR controller that regulates the robots to a goal state,
and $\procNoise{t}\sim\gaussian{\zero{}}{\Q{t}}$ with $\Q{t}=\eye{2N\times 2N}$.
The LQR parameters were chosen such that 
the dependency sets $\depsetapxtir{t}{i}{\A{}}, \depsetapxtir{t}{i}{\xapx{}},$ and $\depsetapxtir{t}{i}{\xapxprior{}}$ require agents to communicate with four other agents when computing state estimates. 
The joint measurement model is given by
    $\y{t} = \x{t} + \measNoise{t}$,
where $\measNoise{t}\sim\gaussian{\zero{}}{\Rbf{t}}$ with $\Rbf{t}=\eye{2N\times 2N}$.

Every $B$ iterations of Algorithm~\ref{alg_bcd_asynch}, agents
communicate a single time with just their neighbors in the dependency
sets in Algorithm~\ref{alg_bcd_asynch}. 
Agents executing DRWT~\cite{shorinwa2020distributed} were permitted to perform all-to-all communications.
Algorithm~\ref{alg_bcd_asynch} used the step size $\step=10^{-3}$ and DRWT used the step size $\step=1$.
Agents executing Algorithm~\ref{alg_bcd_asynch} performed $\maxCalcs$ computations per value of~$t$, while agents performing DRWT 
were only able perform an update when communication occurred due to its synchronous algorithm formulation.  

We ran these simulations while varying the number 
of agents $N\in\{4, 8, 16, 32, 64, 128\}$, the number 
of calculations per time step $\maxCalcs\in\{50, 100, 500, 2500, 5000, 10000\}$, and the communication delay $B\in\{1, 2, 5, 10, 100, 500, 2500, 5000\}$. Additionally, each parameter combination was simulated 8 times with different random number generator initializations\footnote{Code available on GitHub: \ap{\url{https://github.com/pooleya19/asynch-dist-estimation-cpp}}}.
For each estimate $\hat{x}$ and MAP estimate $x$, we compute the sub-optimality from MAP as $\norm{\hat{x} - x}_2$.

Figure~\ref{fig_error_vs_num_agents} shows the sub-optimality from MAP for every~$N\in \{4, \ldots, 128\}$ 
with $\maxCalcs=10000$, $B=10$, and averaged over the randomized runs.
We note that for all $N$, the BCD curve visually overlaps with the approximate MAP curve, indicating convergence.
Additionally, the BCD estimate and the approximate MAP estimate attain a small sub-optimality for all $N$.
In contrast, while DRWT approaches the MAP estimate with 4 agents, the DRWT estimation error 
increases quickly as the number of agents increases.
\ap{BCD attains error up to $64$\% lower than DRWT, and the RMSE between the MAP and the approximate MAP is upper bounded by $4.4\cdot 10^{-4}$ across all simulations.}

\begin{figure}
    \centering
    \includegraphics[width=0.9\linewidth]{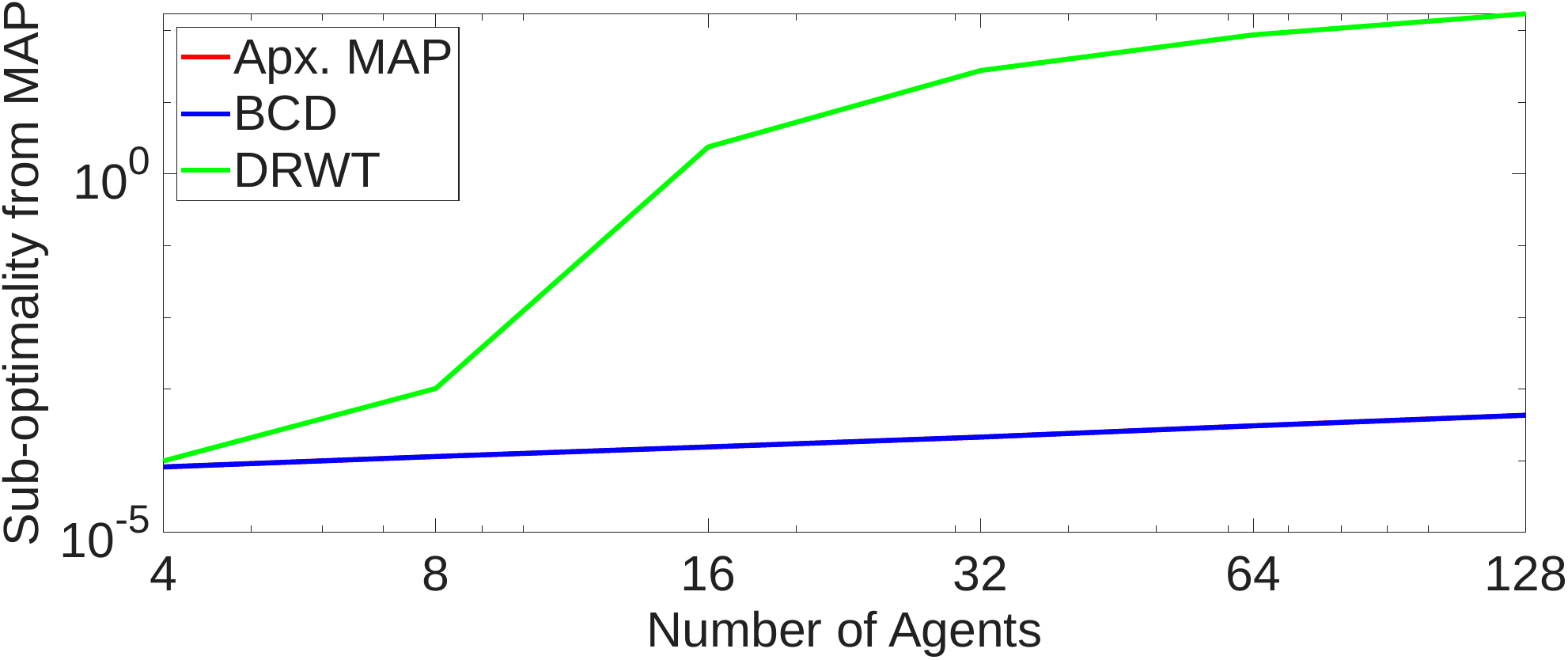}
    \caption{Sub-optimality from the centralized MAP estimate and (i) the approximate MAP estimate (red), (ii) the Algorithm~\ref{alg_bcd_asynch} estimate (blue), and 
    (iii) the DRWT estimate (green), for a range of values of~$N$. We observe that the red and blue curves visually overlap for all numbers of agents.}
    \label{fig_error_vs_num_agents}
\end{figure}


\subsection{Experimental Results}
\label{sect_results_experiments}

We implemented our algorithm with~$4$ agents to estimate the trajectories of~$4$ robots navigating between waypoints.
This demonstration was executed on the Robotarium~\cite{wilson2020robotarium}, shown in Figure~\ref{fig_robotarium}.
Agent $1$ uses GPS measurements of robot $1$, while agent $i$ for $i\in\{2,3,4\}$ measures the relative position between robots $i$ and $i-1$, which induces dependencies between the agents, shown 
as green arrows in Figure~\ref{fig_robotarium}.
The demonstration was repeatedly executed for values of $B\in\{25,50,500,1000,2000,2500,5000\}$ with agents performing $30000$ onboard calculations per time step. 
For each $B$, the agents performed BCD over time steps $t\in\{1,\dots,20\}$ and advanced time steps after $1.32$ seconds.
\ap{All parameters were chosen to demonstrate algorithm performance under varying delay conditions.}

\begin{figure}
    \centering
    \includegraphics[width=0.8\linewidth]{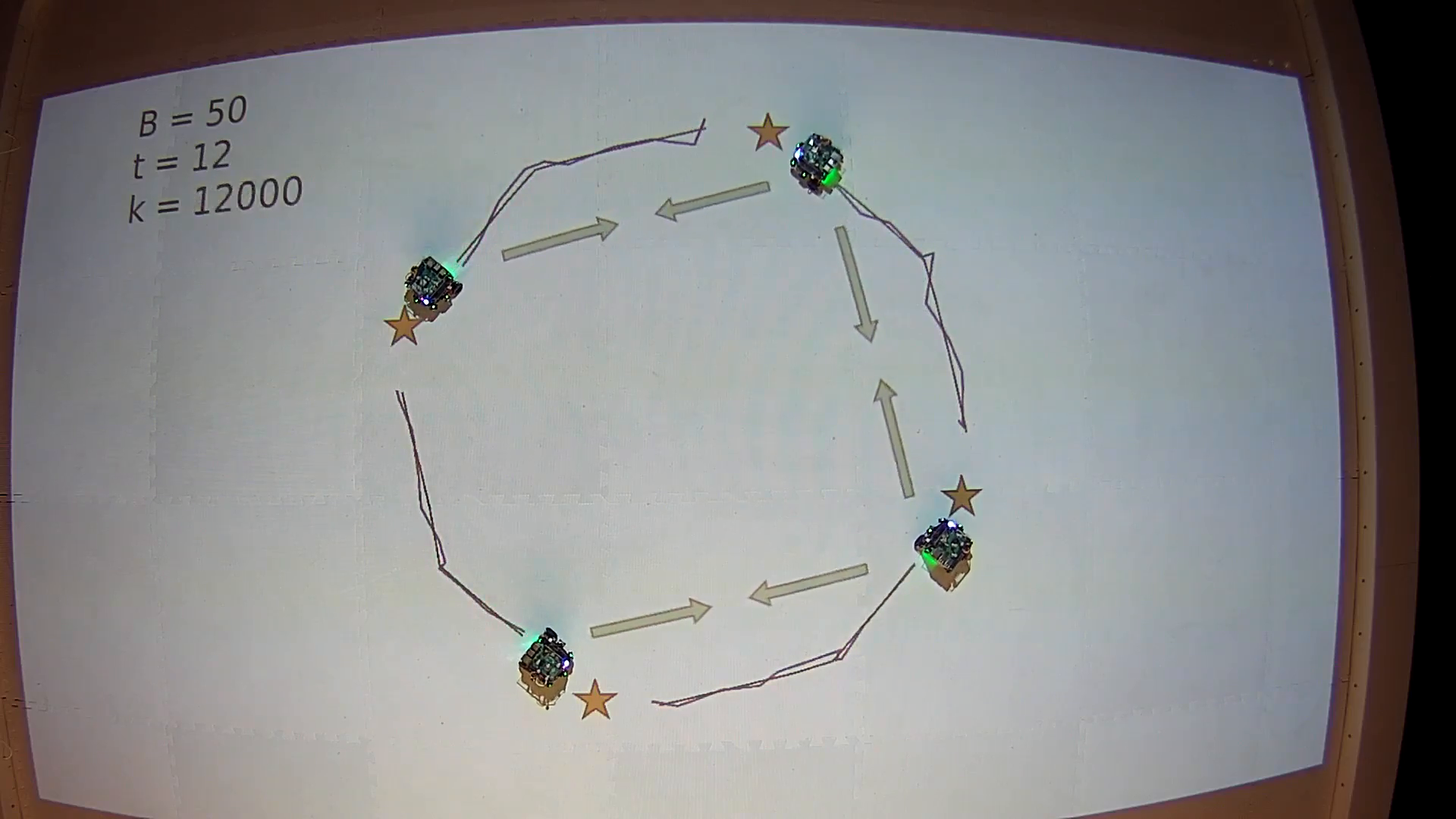}
    \caption{Four robots navigating between waypoints (yellow stars). 
    Black lines show the robots' true trajectories and blue lines show the approximate MAP estimate.
    Red lines show the estimate produced by Algorithm~\ref{alg_bcd_asynch}. 
    }
    \label{fig_robotarium}
\end{figure}

Figure~\ref{fig_demo_convergence_all} shows~$\norm{\hat{x}-x}_2$, where 
$x$ is the approximate MAP estimate and~$\hat{x}$ is the BCD estimate.
When $t$ advances and the problem reinitializes, the error spikes, followed by an exponential decrease as BCD iterations are performed.
As shown in Figure~\ref{fig_demo_convergence_all}, the BCD estimates converge to the approximate MAP solution faster for smaller values of $B$.

\begin{figure}
    \centering
    \includegraphics[width=0.9\linewidth]{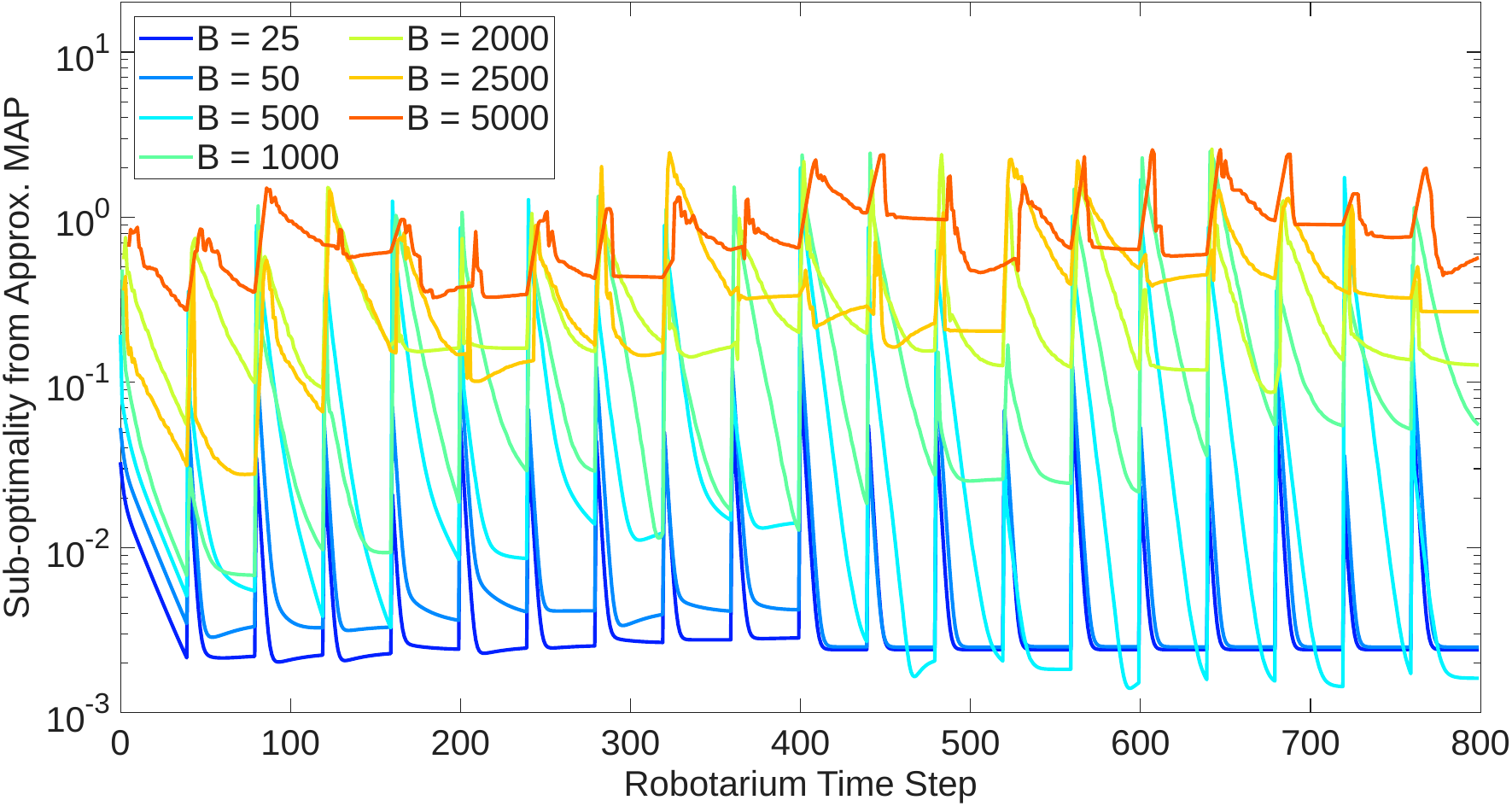}
    \caption{Graph of estimation error between BCD and approximate MAP over time for varying values of $B$.}
    \label{fig_demo_convergence_all}
\end{figure}


\section{Conclusion}
\label{sect_conclusion}

We have presented a novel application of block coordinate descent to estimate a sliding window trajectory under partially asynchronous conditions.
Compared to synchronous sliding window trajectory estimation, our algorithm attains lower error when performed with lower communication rates.
Future work will take advantage of independently selected step sizes for faster convergence.

\bibliographystyle{IEEEtran}
\bibliography{references}

@inproceedings{shorinwa2020distributed,
  title={Distributed multi-target tracking for autonomous vehicle fleets},
  author={Shorinwa, Ola and Yu, Javier and Halsted, Trevor and Koufos, Alex and Schwager, Mac},
  booktitle={2020 IEEE Int. Conf. Robot. Autom. (ICRA)},
  pages={3495--3501},
  year={2020},
  organization={IEEE}
}

@book{bertsekas1989parallel,
  title={Parallel and Distributed Computation: Numerical Methods},
  author={Bertsekas, Dimitri and Tsitsiklis, John},
  year={1989},
  publisher={Prentice-Hall, Inc.}
}

@article{tseng1991rate,
  title={On the rate of convergence of a partially asynchronous gradient projection algorithm},
  author={Tseng, Paul},
  journal={SIAM Journal on Optimization},
  volume={1},
  number={4},
  pages={603--619},
  year={1991},
  publisher={SIAM}
}

@book{thrun2005probabilistic,
    author = {Thrun, Sebastian and Burgard, Wolfram and Fox, Dieter},
    title = {Probabilistic Robotics (Intelligent Robotics and Autonomous Agents)},
    year = {2005},
    isbn = {0262201623},
    publisher = {The MIT Press}
}

@article{testa2025tutorial,
  title={A tutorial on distributed optimization for cooperative robotics: from setups and algorithms to toolboxes and research directions},
  author={Testa, Andrea and Carnevale, Guido and Notarstefano, Giuseppe},
  journal={Proceedings of the IEEE},
  year={2025},
  publisher={IEEE}
}

@article{jaleel2020distributed,
  title={Distributed optimization for robot networks: From real-time convex optimization to game-theoretic self-organization},
  author={Jaleel, Hassan and Shamma, Jeff S},
  journal={Proceedings of the IEEE},
  volume={108},
  number={11},
  pages={1953--1967},
  year={2020},
  publisher={IEEE}
}

@article{shorinwa2024distributedtutorial,
  title={Distributed optimization methods for multi-robot systems: Part 1—a tutorial [tutorial]},
  author={Shorinwa, Ola and Halsted, Trevor and Yu, Javier and Schwager, Mac},
  journal={IEEE Robotics \& Automation Magazine},
  volume={31},
  number={3},
  pages={121--138},
  year={2024},
  publisher={IEEE}
}

@article{khodayi2019distributed,
  title={Distributed state estimation using intermittently connected robot networks},
  author={Khodayi-Mehr, Reza and Kantaros, Yiannis and Zavlanos, Michael M},
  journal={IEEE Transactions on Robotics},
  volume={35},
  number={3},
  pages={709--724},
  year={2019},
  publisher={IEEE}
}

@inproceedings{olfati2007distributed,
  title={Distributed Kalman filtering for sensor networks},
  author={Olfati-Saber, Reza},
  booktitle={2007 46th IEEE Conf. on Decision and Control},
  pages={5492--5498},
  year={2007},
  organization={IEEE}
}

@inproceedings{olfati2005distributed,
  title={Distributed Kalman filter with embedded consensus filters},
  author={Olfati-Saber, Reza},
  booktitle={Proceedings of the 44th IEEE Conf. on Decision and Control},
  pages={8179--8184},
  year={2005},
  organization={IEEE}
}

@article{mateos2010distributed,
  title={Distributed sparse linear regression},
  author={Mateos, Gonzalo and Bazerque, Juan Andr{\'e}s and Giannakis, Georgios B},
  journal={IEEE Transactions on Signal Processing},
  volume={58},
  number={10},
  pages={5262--5276},
  year={2010},
  publisher={IEEE}
}

@article{shi2015extra,
  title={Extra: An exact first-order algorithm for decentralized consensus optimization},
  author={Shi, Wei and Ling, Qing and Wu, Gang and Yin, Wotao},
  journal={SIAM Journal on Optimization},
  volume={25},
  number={2},
  pages={944--966},
  year={2015},
  publisher={SIAM}
}

@article{roumeliotis2002distributed,
  title={Distributed multirobot localization},
  author={Roumeliotis, Stergios I and Bekey, George A},
  journal={IEEE Transactions on Robotics and Automation},
  volume={18},
  number={5},
  pages={781--795},
  year={2002},
  publisher={IEEE}
}

@article{karakus2019redundancy,
  title={Redundancy techniques for straggler mitigation in distributed optimization and learning},
  author={Karakus, Can and Sun, Yifan and Diggavi, Suhas and Yin, Wotao},
  journal={J. Machine Learning Research},
  volume={20},
  number={72},
  pages={1--47},
  year={2019}
}

@article{priyadarshani2025jamming,
  title={Jamming intrusions in extreme bandwidth communication: A comprehensive overview},
  author={Priyadarshani, Richa and Ata, Yal{\c{c}}in and Alouini, Mohamed-Slim and others},
  journal={IEEE Open Journal of the Communications Society},
  year={2025},
  publisher={IEEE}
}

@article{li2014communication,
  title={Communication efficient distributed machine learning with the parameter server},
  author={Li, Mu and Andersen, David G and Smola, Alexander and Yu, Kai},
  journal={Advances in neural information processing systems},
  volume={27},
  year={2014}
}

@article{nassralla2023hybrid,
  title={Hybrid Distributed Optimization for Learning Over Networks With Heterogeneous Agents},
  author={Nassralla, Mohammad H and Akl, Naeem and Dawy, Zaher},
  journal={IEEE Access},
  volume={11},
  pages={103530--103543},
  year={2023},
  publisher={IEEE}
}

@article{wang2025efficient,
  title={Efficient Multi-Cluster Scheduling for Heterogeneous Workloads},
  author={Wang, Xuemin and Li, Xinchi and Xia, Xiaoqing and Yang, Mingchuan},
  journal={IEEE Access},
  volume={13},
  pages={186856--186871},
  year={2025},
  publisher={IEEE}
}

@article{dean2013tail,
  title={The tail at scale},
  author={Dean, Jeffrey and Barroso, Luiz Andr{\'e}},
  journal={Communications of the ACM},
  volume={56},
  number={2},
  pages={74--80},
  year={2013},
  publisher={ACM New York, NY, USA}
}

@article{ziegler2021distributed,
  title={Distributed formation estimation via pairwise distance measurements},
  author={Ziegler, Thomas and Karrer, Marco and Schmuck, Patrik and Chli, Margarita},
  journal={IEEE Robotics and Automation Letters},
  volume={6},
  number={2},
  pages={3017--3024},
  year={2021},
  publisher={IEEE}
}

@article{xu2024d,
  title={$D^{2}$SLAM: Decentralized and distributed collaborative visual-inertial SLAM system for aerial swarm},
  author={Xu, Hao and Liu, Peize and Chen, Xinyi and Shen, Shaojie},
  journal={IEEE Transactions on Robotics},
  volume={40},
  pages={3445--3464},
  year={2024},
  publisher={IEEE}
}

@article{wilson2020robotarium,
  title={The robotarium: Globally impactful opportunities, challenges, and lessons learned in remote-access, distributed control of multirobot systems},
  author={Wilson, Sean and Glotfelter, Paul and Wang, Li and Mayya, Siddharth and Notomista, Gennaro and Mote, Mark and Egerstedt, Magnus},
  journal={IEEE Control Systems Magazine},
  volume={40},
  number={1},
  pages={26--44},
  year={2020},
  publisher={IEEE}
}

@misc{pooley2026technical,
      title={Technical Report: Asynchronous Distributed Trajectory Estimation of Multi-Robot Systems}, 
      author={Adam Pooley and Matthew Hale},
      year={2026},
      eprint={2607.01106},
      archivePrefix={arXiv},
      primaryClass={cs.RO},
      url={https://arxiv.org/abs/2607.01106}, 
}

\appendix



\subsection{Proof of Lemma~\ref{lemma_synch_analytical_solution}}
\label{subsect_lemma_synch_analytical_solution}
\begin{lemma}
    \label{lemma_synch_analytical_solution}
    For $\J$ given by \eqref{eq_obj_func_true_quad},
    the analytical solution to minimizing $\J$ is the estimate
    \begin{equation}
        \label{eq_est_true}
        \xest{\sliwin{t}} = -\K{t}^{-1} \f{t},
    \end{equation}
    with covariance matrix
    \begin{equation}
        \label{eq_cov_true}
        \Pest{\sliwin{t}} = 2\K{t}^{-1}.
    \end{equation}
\end{lemma}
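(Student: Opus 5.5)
The plan is to show that $\J$ is a strictly convex quadratic, solve its first-order condition, and then read off the covariance by writing the posterior in \eqref{eq_map_sliding_window} as a Gaussian density in $\x{\sliwin{t}}$.

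First, I will show $\K{t} = 2\Hbf{t}^\top \W{t}^{-1}\Hbf{t}$ is positive definite. Since $\Q{t-1}\succ 0$, $\Rbf{t}\succ 0$ and $\Pestprior{\psliwin{t}}\succ 0$, the block-diagonal $\W{t}$ in \eqref{eq_def_W} is positive definite, and so is $\W{t}^{-1}$. Thus $\K{t}\succeq 0$, and $v^\top \K{t} v = 0$ forces $\Hbf{t}v = 0$. Write $v = [v_{\psliwin{t}}^\top\; v_t^\top]^\top$. The $\Pibf{t}$ rows of $\Hbf{t}$ give $v_{\psliwin{t}} = 0$, and the $\F{t}$ rows then give $v_t - \A{t-1}\cdot 0 = v_t = 0$. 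Hence $\Hbf{t}$ has full column rank and $\K{t}\succ 0$. This is the only step that needs a real check, and it works because of the identity block in $\F{t}$ together with the $\Pibf{t}$ block.

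Second, by \eqref{eq_obj_func_true_quad}, $\J$ is then strictly convex with gradient $\nabla\J = \K{t}\xest{\sliwin{t}} + \f{t}$. Setting this to zero gives the unique minimizer $\xest{\sliwin{t}} = -\K{t}^{-1}\f{t}$, which is \eqref{eq_est_true}. Since the minimizer of $\J$ is the solution of \eqref{eq_map_sliding_window}, this is the MAP estimate.

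Third, for the covariance I use that every factor in \eqref{eq_map_sliding_window} is Gaussian. Up to a normalizing constant, the posterior density of $\x{\sliwin{t}}$ is $\exp(-\tfrac12 \J(\x{\sliwin{t}}))$, because $\J$ in \eqref{eq_obj_func_true_norms} is exactly the sum of the three Mahalanobis terms in the exponents. Completing the square gives
\begin{equation}
\tfrac12\J(\mathbf{x}) = \tfrac14 (\mathbf{x} - \xest{\sliwin{t}})^\top \K{t} (\mathbf{x} - \xest{\sliwin{t}}) + \text{const}.
\end{equation}
Matching this against $\tfrac12(\mathbf{x}-\xest{\sliwin{t}})^\top \Pest{\sliwin{t}}^{-1}(\mathbf{x}-\xest{\sliwin{t}})$ yields $\Pest{\sliwin{t}}^{-1} = \tfrac12 \K{t}$, that is, $\Pest{\sliwin{t}} = 2\K{t}^{-1}$, which is \eqref{eq_cov_true}. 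The only care needed is the factor of $2$: $\J$ omits the usual $\tfrac12$ in front of the norms, while \eqref{eq_obj_func_true_quad} carries a $\tfrac12$ with $\K{t}$ defined using a factor $2$, so $\tfrac12\J$ has Hessian $\tfrac12\K{t}$.
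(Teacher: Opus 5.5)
Your proof is correct. Its first two steps match the paper's: $\W{t}\succ 0$ together with full column rank of $\Hbf{t}$ gives $\K{t}\succ 0$, and setting $\nabla\J=\K{t}\xest{\sliwin{t}}+\f{t}$ to zero gives the unique minimizer. You actually verify the column-rank claim through the $\Pibf{t}$ and $\F{t}$ rows, which the paper only asserts.

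The covariance step is where you take a different route. The paper defines $\Pest{\sliwin{t}}$ as the expected outer product of the error $\x{\sliwin{t}}-\xest{\sliwin{t}}$. It then says that substituting $\xest{\sliwin{t}}=-\K{t}^{-1}\f{t}$ and simplifying gives $2\K{t}^{-1}$. Carried out, that is the weighted-least-squares error computation. It needs the stacked model $\z{t}=\Hbf{t}\x{\sliwin{t}}+\mathbf{e}$ with $\mathbf{e}\sim\gaussian{\zero{}}{\W{t}}$, which treats the prior mean as a noisy observation of $\x{\psliwin{t}}$ with covariance $\Pestprior{\psliwin{t}}$, independent of the process and measurement noise. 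It then produces $(\Hbf{t}^\top\W{t}^{-1}\Hbf{t})^{-1}$.

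You instead read the covariance directly off the Gaussian posterior $\propto\exp(-\tfrac12\J)$ by completing the square. This uses only the factorization in \eqref{eq_map_sliding_window}, and it makes the factor-of-$2$ bookkeeping explicit rather than leaving it inside ``simplifying.'' The two agree. Since the posterior is Gaussian, its mode $\xest{\sliwin{t}}$ is also its mean. So the matrix you obtain is the error covariance $\E{(\x{\sliwin{t}}-\xest{\sliwin{t}})(\x{\sliwin{t}}-\xest{\sliwin{t}})^\top}$ conditioned on $\xestprior{\psliwin{t}}$ and $\y{t}$, and in this linear-Gaussian setting it equals the paper's $(\Hbf{t}^\top\W{t}^{-1}\Hbf{t})^{-1}$. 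A side benefit of your route is that it gives you the information matrix $\tfrac12\K{t}=\Hbf{t}^\top\W{t}^{-1}\Hbf{t}$ directly, which is the form the paper relies on later.
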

\begin{proof}
    For $\J$ as given in \eqref{eq_obj_func_true_quad},
    the gradient $\nabla \J$ is
    \begin{equation}
        \label{eq_gradient_true_lemma}
        \nabla \J(\xest{\sliwin{t}}) = \K{t} \xest{\sliwin{t}} + \f{t},
    \end{equation}
    and the Hessian $\nabla^2 \J(\xest{\sliwin{t}})$ is
    \begin{equation}
        \label{eq_hessian_true}
        \nabla^2 \J(\xest{\sliwin{t}}) = \K{t}.
    \end{equation}

    We briefly prove that the Hessian $\nabla^2 \J(\xest{\sliwin{t}}) = \K{t}$ is positive definite, i.e., $\K{t} \succ 0$.
    Recall that we assume $\Q{t}, \Rbf{t}, \Pestprior{\psliwin{t}} \succ 0$ for all $t$.
    We know $\W{t}\succ 0$ by \eqref{eq_def_W}.
    Because $\Hbf{t}$ is full column-rank by \eqref{eq_def_H}, its null space is the zero vector by the rank-nullity theorem.
    As such, from \eqref{eq_obj_func_true_quad}, we know $\K{t} = 2\Hbf{t}^\top \W{t}^{-1} \Hbf{t} \succ 0$.
    Additionally, because matrix inversion preserves positive definiteness, $\Pest{\sliwin{t}}\succ 0$ from \eqref{eq_cov_true},
    and because taking a principle submatrix preserves positive definiteness from the Cauchy interlacing theorem, $\Pestprior{t-\T{t+1}+1:t}\succ 0$ from \eqref{eq_cov_prior_update}.

    Because $\J$ is quadratic with a positive definite Hessian $\K{t} \succ 0$,
    it is strongly convex, and its unique minimizer
    can be found by setting the gradient $\nabla J(\xest{\sliwin{t}})$ to zero and solving for $\xest{\sliwin{t}}$, which gives
    \begin{equation}
        \label{eq_est_true_lemma}
        \xest{\sliwin{t}} = -\K{t}^{-1} \f{t}.
    \end{equation}
    \ap{We calculate its covariance according to
    \begin{align}
        \Pest{\sliwin{t}} &= \E{\x{\sliwin{t}} - \E{\x{\sliwin{t}}}} \E{\x{\sliwin{t}} - \E{\x{\sliwin{t}}}}^\top \\
                          &= \E{\x{\sliwin{t}} - \xest{\sliwin{t}}} \E{\x{\sliwin{t}} - \xest{\sliwin{t}}}^\top.
    \end{align}
    Substituting in $\xest{\sliwin{t}}$ from \eqref{eq_est_true_lemma}} and simplifying yields
    \begin{equation}
        \Pest{\sliwin{t}} = 2\K{t}^{-1}.
    \end{equation}
    
\end{proof}

\subsection{Proof of Proposition~\ref{proposition_dep_synch}}
\label{subsect_proposition_dep_synch}
\begin{proof}
    We begin with the definition of $\depsettir{t}{i}{\xest{}}$ as defined in \eqref{eq_depsets_synch}.
    By using \eqref{eq_cov_true}, we say $\depsettir{t}{i}{\xest{}} = \depop{i}{\Pest{\sliwin{t}}^{-1}}$.

    We define
    $\infmatestprior{0:0} = \Pestprior{0:0}^{-1}$,
    $\infmatest{\sliwin{t}} = \Pest{\sliwin{t}}^{-1}$, and
    $\infmatestprior{\psliwin{t}} = \Pestprior{\psliwin{t}}^{-1}$,
    such that 
    $\depsettir{t}{i}{\xest{}} = \depop{i}{\infmatest{\sliwin{t}}}$.
    
    We unravel the recursive update structure to express $\infmatest{\sliwin{t}}$ in closed form for $t\in\seq{T}$.
    
    From $\K{t}$ as defined in \eqref{eq_obj_func_true_quad} and \eqref{eq_cov_true}, we can calculate the information matrix $\infmatest{\sliwin{t}}$ associated with estimate $\xest{\sliwin{t}}$ according to
    \begin{equation}
        \infmatest{\sliwin{t}} = \Hbf{t}^\top \W{t}^{-1} \Hbf{t},
    \end{equation}
    \ap{where $\Hbf{t}$ and $\W{t}$ are from \eqref{eq_def_H} and \eqref{eq_def_W}.}
    Expanding the right hand side yields
    \begin{equation}
        \label{eq_infmatest_sum}
        \infmatest{\sliwin{t}} = \F{t}^\top \Q{t-1}^{-1} \F{t} + \G{t}^\top \Rinv{t} \G{t} + \Pibf{t}^\top \infmatestprior{\psliwin{t}} \Pibf{t}.
    \end{equation}
    
    The nonzero terms from $\F{t}^\top \Q{t-1}^{-1} \F{t} + \G{t}^\top \Rinv{t} \G{t}$ contribute to a principal submatrix \ap{of $\infmatest{\sliwin{t}}$}, denoted $\infsub{\ell}\in\R{2n\times 2n}$, which we define by
    \begin{equation}
        \label{eq_def_infsub}
        \infsub{\ell} = \begin{bmatrix}
            \A{\ell-1}^\top \Q{\ell-1}^{-1} \A{\ell-1} & -\A{\ell-1}^\top \Q{\ell-1}^{-1} \\
            -\Q{\ell-1}^{-1} \A{\ell-1} & \Q{\ell-1}^{-1} + \C{\ell}^\top \Rinv{\ell} \C{\ell}
        \end{bmatrix},
    \end{equation}
    such that
    \begin{multline}
        \F{t}^\top \Q{t-1}^{-1} \F{t} + \G{t}^\top \Rinv{t} \G{t} \\
        = \begin{bmatrix}
            \zero{n(\T{t}-1)\times n(\T{t}-1)} & \zero{n(\T{t}-1)\times 2n} \\
            \zero{2n \times n(\T{t}-1)} & \infsub{t}
        \end{bmatrix}.
    \end{multline}
    To rigorously handle where $\infsub{t}$ appears, we define $\infsubpos{a}{b}\in\R{2n\times bn}$ for integers $a,b$ according to
    \begin{equation}
        \infsubpos{a}{b} = \begin{bmatrix}
            \zero{2n\times n(a-1)} & \eye{2n\times 2n} & \zero{2n\times n(b-a-1)}
        \end{bmatrix},
    \end{equation}
    allowing us to express
    \begin{equation}
        \label{eq_infsub_infsubpos}
        \F{t}^\top \Q{t-1}^{-1} \F{t} + \G{t}^\top \Rinv{t} \G{t}
        = \infsubpos{\T{t}}{\T{t}+1}^\top \infsub{t} \infsubpos{\T{t}}{\T{t}+1}.
    \end{equation}
    Plugging \eqref{eq_infsub_infsubpos} into \eqref{eq_infmatest_sum} yields
    \begin{equation}
        \label{eq_infmatest_infsub}
        \infmatest{\sliwin{t}} = \infsubpos{\T{t}}{\T{t}+1}^\top \infsub{t} \infsubpos{\T{t}}{\T{t}+1} + \Pibf{t}^\top \infmatestprior{\psliwin{t}} \Pibf{t}.
    \end{equation}
    
    The next prior information matrix $\infmatestprior{\psliwin{t+1}}$ can be calculated by expressing the prior covariance calculation in \eqref{eq_cov_prior_update} using information matrices, yielding
    \begin{equation}
        \label{eq_infmatestprior_update}
        \infmatestprior{\psliwin{t+1}} = (\U{t} \infmatest{\sliwin{t}}^{-1} \U{t}^\top)^{-1}.
    \end{equation}
    When $t\in\seq{T-1}$, we know \ap{\eqref{eq_def_U} reduces to} $\U{t} = \eye{n\T{t+1}\times n\T{t+1}}$,
    meaning that \eqref{eq_infmatestprior_update} reduces to
    \begin{equation}
        \label{eq_infmatestprior_early}
        \infmatestprior{\psliwin{t+1}} = \infmatest{\sliwin{t}} \quad \text{for } t \in \seq{T-1}.
    \end{equation}
    
    We define $\infpriorpos{a}\in\R{n\times an}$ for some integer $a$ according to
    \begin{equation}
        \infpriorpos{a} = \begin{bmatrix}
            \eye{n\times n} & \zero{n\times n(a-1)}
        \end{bmatrix}.
    \end{equation}
    
    The recursive calculation of $\infmatest{\sliwin{t}}$ for $t\in \seq{T}$ can be expressed by using the initial condition $\infmatestprior{\psliwin{1}} = \infmatestprior{0:0}$ and by combining \eqref{eq_infmatest_infsub} with \eqref{eq_infmatestprior_early} which yields
    \begin{equation}
        \label{eq_infmatest_early_sum}
        \infmatest{\sliwin{t}} = \infpriorpos{\T{t}+1}^\top \infmatestprior{0:0} \infpriorpos{\T{t}+1} + \sum_{\ell=1}^{\T{t}} \infsubpos{\ell}{\T{t}+1}^\top \infsub{t-\T{t}+\ell} \infsubpos{\ell}{\T{t}+1}.
    \end{equation}
    
    

    Next, we will derive the updates for $\infmatest{\sliwin{t}}$ for $t > T$ by utilizing the Schur complement.
    For $t\geq T$, we know \ap{\eqref{eq_def_U} reduces to} $\U{t} = \begin{bmatrix}
        \zero{nT \times n} & \eye{nT \times nT}
    \end{bmatrix}$, allowing us to express \eqref{eq_infmatestprior_update} as
    \begin{equation}
        \label{eq_infmatestprior_big}
        \infmatestprior{\psliwin{t+1}} = \left(
        [\zero{nT\times n} \hspace{5pt} \eye{nT\times nT}]
        \infmatest{\sliwin{t}}^{-1} \begin{bmatrix}
            \zero{n\times nT} \\ \eye{nT\times nT}
        \end{bmatrix} \right)^{-1}.
    \end{equation}
    \ap{Using \eqref{eq_infmatestprior_big}}
    is equivalent to inverting $\infmatest{\sliwin{t}}$, taking the lower right $nT\times nT$ principal submatrix, then inverting again.
    We partition
    \begin{equation}
        \label{eq_infmatest_partition}
        \infmatest{\sliwin{t}} = \begin{bmatrix}
            \infmatA{t} & \infmatB{t} \\
            \infmatC{t} & \infmatD{t}
        \end{bmatrix} \quad \text{for } t\geq T,
    \end{equation}
    such that $\infmatA{t}\in\R{n\times n}, \infmatB{t} = \infmatC{t}^\top \in \R{n\times nT}$, and $\infmatD{t}\in\R{nT\times nT}$.
    
    We substitute the partitioning of $\infmatest{\sliwin{t}}$ from \eqref{eq_infmatest_partition} into \eqref{eq_infmatestprior_big}, which yields
    \begin{equation}
        \infmatestprior{\psliwin{t+1}} = \left( 
        [\zero{nT\times n} \hspace{5pt} \eye{nT\times nT}] 
        \begin{bmatrix}
            \infmatA{t} & \hspace{-5pt}\infmatB{t} \\
            \infmatC{t} & \hspace{-5pt}\infmatD{t}
        \end{bmatrix}^{-1} \begin{bmatrix}
            \zero{n\times nT} \\ \eye{nT\times nT}
        \end{bmatrix} \right)^{-1}.
    \end{equation}
    Applying the Schur complement yields
    \begin{equation}
        \label{eq_infmatestprior_schur}
        \infmatestprior{\psliwin{t+1}} = \infmatD{t} - \infmatC{t} \infmatA{t}^{-1} \infmatB{t},
        \quad \text{for } t \geq T.
    \end{equation}

    We evaluate the partitions of $\infmatest{\sliwin{t}}$ at $t=T$ using \eqref{eq_infmatest_early_sum}, yielding
    \begin{align}
        \infmatA{t} &= \infmatestprior{0:0} + \A{0}^\top \Q{0}^{-1} \A{0}
        \\
        \infmatB{t} 
        &= \infmatC{t}^\top = -\A{0}^\top \Q{0}^{-1} \infpriorpos{T},
    \end{align}
    and
    \begin{multline}
        \infmatD{t} = \infpriorpos{\T{}}^\top (\Q{0}^{-1} + \C{1}^\top \Rinv{1} \C{1}) \infpriorpos{\T{}}
        \\
        + \sum_{\ell=1}^{\T{}-1} \infsubpos{\ell}{\T{}}^\top \infsub{t-T+\ell+1} \infsubpos{\ell}{\T{}}.
    \end{multline}

    \ap{T}he structure of $\infmatB{t}$ and $\infmatC{t}$ cause\ap{s} $\infmatA{t}^{-1}$ to propagate to the upper-left $n\times n$ submatrix of the resulting matrix \ap{$\infmatestprior{\psliwin{t+1}}$ because}
    \begin{equation}
        \label{eq_schurproduct}
         \infmatC{t} \infmatA{t}^{-1} \infmatB{t} = \infpriorpos{T}^\top \Q{t-\T{}}^{-1} \A{t-\T{}} \infmatA{t}^{-1} \A{t-\T{}}^\top \Q{t-\T{}}^{-1} \infpriorpos{T}.
    \end{equation}

    To help us discuss the
    quantity that recursively propagates, we define
    \begin{multline}
        \label{eq_def_infmatcom}
        \infmatcom{t+1} = \Q{t-\T{}}^{-1} + \C{t-\T{}+1}^\top \Rinv{t-\T{}+1} \C{t-\T{}+1}
        \\
        - \Q{t-\T{}}^{-1} \A{t-\T{}} \infmatA{t}^{-1} \A{t-\T{}}^\top \Q{t-\T{}}^{-1}.
    \end{multline}
    We calculate $\infmatestprior{\psliwin{t+1}}$ according to \eqref{eq_infmatestprior_schur} which yields
    \begin{equation}
        \label{eq_def_infmatprior_later}
        \infmatestprior{\psliwin{t+1}} = \infpriorpos{T}^\top \infmatcom{t+1} \infpriorpos{T} + \sum_{\ell=1}^{\T{}-1} \infsubpos{\ell}{\T{}}^\top \infsub{t-T+\ell+1} \infsubpos{\ell}{\T{}}.
    \end{equation}
    Now, we increment $t = T + 1$ and calculate $\infmatest{\sliwin{t}}$ from \eqref{eq_infmatest_infsub}
    using $\infmatestprior{\psliwin{t}}$ from \eqref{eq_def_infmatprior_later} which yields
    \begin{equation}
        \label{eq_infmatest_one_iteration}
        \infmatest{\sliwin{t}} = \infpriorpos{T+1}^\top \infmatcom{t} \infpriorpos{T+1} 
        + \sum_{\ell=1}^T \infsubpos{\ell}{T+1}^\top \infsub{t-T+\ell} \infsubpos{\ell}{T+1}.
    \end{equation}

    To consider the effects of recursively propagating $\infmatcom{t}$, we now consider the partitions of $\infmatest{\sliwin{t}}$ from \eqref{eq_infmatest_one_iteration}, yielding
    \begin{align}
        \label{eq_def_infmatA}
        \infmatA{t} &= \infmatcom{t} + \A{t-T}^\top \Qinv{t-T} \A{t-T}
        \\
        \infmatB{t} &= \infmatC{t}^\top = -\A{t-T}^\top \Qinv{t-T} \infpriorpos{T},
    \end{align}
    and
    \begin{multline}
        \label{eq_def_infmatD}
        \infmatD{t} = \infpriorpos{T}^\top (\Qinv{t-T} + \C{t-T+1}^\top \Rinv{t-T+1} \C{t-T+1}) \infpriorpos{T}
        \\ + \sum_{\ell=1}^{T-1} \infsubpos{\ell}{T}^\top \infsub{t-T+\ell+1} \infsubpos{\ell}{T}.
    \end{multline}

    By contrasting the partitions of $\infmatest{\sliwin{t}}$ evaluated at $t=T$ and $t=T+1$, we can see that the recursive updates of $\infmatestprior{\psliwin{t}}$ and $\infmatest{\sliwin{t}}$ from \eqref{eq_infmatestprior_schur} and \eqref{eq_infmatest_infsub} do not change the structure of $\infmatB{t}, \infmatC{t}$, or $\infmatD{t}$.
    This means that when evaluating $\infmatestprior{\psliwin{t+1}}$ using the Schur complement in \eqref{eq_infmatestprior_schur}, $\infmatC{t}^\top \infmatA{t}^{-1} \infmatB{t}$ can always be expressed as in \eqref{eq_schurproduct}, which means $\infmatcom{t+1}$ can always be expressed as in \eqref{eq_def_infmatcom}.
    By combining \eqref{eq_def_infmatcom} and \eqref{eq_def_infmatA}, we derive the recursive update of $\infmatA{t}$, given by
    \begin{multline}
        \infmatA{t} = 
        \C{t-T}^\top \Rinv{t-T} \C{t-T} + \A{t-T}^\top \Qinv{t-T} \A{t-T}
        \\
        + \Qinv{t-T-1} - \Qinv{t-T-1} \A{t-T-1} \infmatA{t-1}^{-1} \A{t-T-1}^\top \Qinv{t-T-1}.
    \end{multline}

    Formally, because $\infmatA{t}$ is a submatrix of $\infmatest{\sliwin{t}}$, we know $\depop{i}{\infmatA{t}} \subseteq \depop{i}{\infmatest{\sliwin{t}}}$.
    Furthermore, because $\depsettir{t}{i}{\xest{}} = \depop{i}{\infmatest{\sliwin{t}}}$, we know $\depop{i}{\infmatA{t}} \subseteq \depsettir{t}{i}{\xest{}}$.

\end{proof}

\subsection{Proof of Lemma~\ref{lemma_asynch_analytical_solution}}
\label{subsect_lemma_asynch_analytical_solution}
\begin{proof}
    \ap{
    For $\Japx$ as given in \eqref{eq_obj_func_approx_quad}, the gradient $\nabla \Japx$ is calculated as
    \begin{equation}
        \nabla \Japx(\xapx{\sliwin{t}}) = \Kapx{t} \xapx{\sliwin{t}} + \fapx{t},
    \end{equation}
    and the Hessian $\nabla^2 \Japx(\xapx{\sliwin{t}})$ is calculated as
    \begin{equation}
        \label{eq_hessian_approx}
        \nabla^2 \Japx(\xapx{\sliwin{t}}) = \Kapx{t}.
    \end{equation}}

    \ap{To assist our proof, we will prove by induction that $\infmatapxprior{\psliwin{t}} \succ 0$ for all $t \geq 1$.
    Recall that we have assumed $\infmatapxprior{0:0}, \Q{t}, \Rbf{t} \succ 0$ for all $t \geq 0$.
    As such, our base case is proven by assumption, because $\infmatapxprior{\psliwin{t}} = \infmatapxprior{0:0} \succ 0$ for $t = 1$.}

    \ap{Now, we assume $\infmatapxprior{\psliwin{t}} \succ 0$ for some $t\geq 1$ and we seek to prove that this implies $\infmatapxprior{\psliwin{t+1}} \succ 0$.
    Because $\Qinv{t-1}, \Rinv{t}, \infmatapxprior{\psliwin{t}} \succ 0$, we know
    $\Wapxinv{t} \succ 0$ from \eqref{eq_Wapx}.
    Because $\Hbf{t}$ is full column-rank from \eqref{eq_def_H}, its null space is the zero vector by the rank-nullity theorem.
    As such, from \eqref{eq_obj_func_approx_quad} we know that $\Kapx{t} = 2\Hbf{t}^\top \Wapx{t}^{-1} \Hbf{t} \succ 0$.}

    Because $\Kapx{t} \succ 0$, then $\Japx(\xapx{\sliwin{t}})$ is quadratic with a positive definite Hessian $\Kapx{t} \succ 0$.
    As such, the minimizer can be found by setting the gradient $\nabla \Japx(\xapx{\sliwin{t}})$ to zero and solving for $\xapx{\sliwin{t}}$, which gives
    \begin{equation}
        \label{eq_xapx_lemma}
        \xapx{\sliwin{t}} = -\Kapx{t}^{-1} \fapx{t}.
    \end{equation}
    We calculate the covariance according to
    \begin{align}
        \Papx{\sliwin{t}} 
        &= \E{\x{\sliwin{t}} - \E{\x{\sliwin{t}}}} \E{\x{\sliwin{t}} - \E{\x{\sliwin{t}}}}^\top
        \\
        &= \E{\x{\sliwin{t}} - \xapx{\sliwin{t}}} \E{\x{\sliwin{t}} - \xapx{\sliwin{t}}}^\top.
    \end{align}
    Substituting in $\xapx{\sliwin{t}}$ from \eqref{eq_xapx_lemma} and simplifying yields
    \begin{equation}
        \label{eq_cov_approx}
        \Papx{\sliwin{t}} = 2\Kapx{t}^{-1}.
    \end{equation}
    By inverting \eqref{eq_cov_approx}, we compute the information matrix $\infmatapx{\sliwin{t}}$ via
    \begin{equation}
        \label{eq_infmatapx_lemma}
        \infmatapx{\sliwin{t}} = \frac{1}{2} \Kapx{t}.
    \end{equation}

    \ap{Combining $\Kapx{t}\succ 0$ and \eqref{eq_infmatapx_lemma} implies $\infmatapx{\sliwin{t}} \succ 0$.
    We next consider the recursive update of $\infmatapxprior{\psliwin{t+1}}$ from $\infmatapx{\sliwin{t}}$ given by \eqref{eq_infmatapxprior}.}

    If $t<\Tmax$, then \eqref{eq_infmatapxprior} reduces to $\infmatapxprior{\psliwin{t+1}} = \infmatapx{\sliwin{t}}$, which directly implies $\infmatapxprior{\psliwin{t+1}} \succ 0$.
    If $t\geq\Tmax$, then \eqref{eq_infmatapxprior} reduces to
    $\infmatapxprior{\psliwin{t+1}} = \begin{bmatrix}
        \zero{n\Tmax\times n} & \eye{n\Tmax\times n\Tmax}
    \end{bmatrix}\infmatapx{\sliwin{t}}\begin{bmatrix}
        \zero{n\times n\Tmax} \\ \eye{n\Tmax\times n\Tmax}
    \end{bmatrix}$.
    The Cauchy interlacing theorem tells us that the eigenvalues of $\infmatapxprior{\psliwin{t+1}}$ are lowered bounded by the eigenvalues of $\infmatapx{\sliwin{t}}$, which implies $\infmatapxprior{\psliwin{t+1}} \succ 0$.
    As such, we have proven by induction that $\infmatapxprior{\psliwin{t}}\succ0 \implies \infmatapxprior{\psliwin{t+1}}\succ0$ for all $t\geq 1$.

    \ap{As such, for all $t\geq 1$, we have proven $\Kapx{t}\succ 0$, which implies that \eqref{eq_xapx_lemma} and \eqref{eq_infmatapx_lemma} hold for all $t \geq 1$.}
\end{proof}

\subsection{Proof of Lemma~\ref{lemma_infmatapxprior_equivalent}}
\label{subsect_lemma_infmatapxprior_equivalent}
\begin{proof}
    To assist our analysis, for some integers $\ell, a$, and $b$ we define
    \begin{align}
        \infsub{\ell} &= \begin{bmatrix}
            \A{\ell-1}^\top \Q{\ell-1}^{-1} \A{\ell-1} & -\A{\ell-1}^\top \Q{\ell-1}^{-1} \\
            -\Q{\ell-1}^{-1} \A{\ell-1} & \Q{\ell-1}^{-1} + \C{\ell}^\top \Rinv{\ell} \C{\ell}
        \end{bmatrix}
        \\
        \infsubpos{a}{b} &= \begin{bmatrix}
            \zero{2n\times n(a-1)} & \eye{2n\times 2n} & \zero{2n\times n(b-a-1)}
        \end{bmatrix}
        \\
        \infpriorpos{a} &= \begin{bmatrix}
            \eye{n\times n} & \zero{n\times n(a-1)}
        \end{bmatrix}.
    \end{align}

    From Appendix~\ref{subsect_proposition_dep_synch} we have $\infmatest{\sliwin{t}}$ for $t\in\seq{T}$ from \eqref{eq_infmatest_early_sum}.
    Additionally, for the partitioning of $\infmatest{\sliwin{t}}$ as described in \eqref{eq_infmatest_partition}, we have $\infmatD{t}$ for $t\geq T$ from \eqref{eq_def_infmatD}.
    Combining these yields
    \begin{multline}
        \label{eq_UinfmatestU_early}
        \U{t} \infmatest{\sliwin{t}} \U{t}^\top = \infpriorpos{\T{t+1}}^\top \infmatestprior{0:0} \infpriorpos{\T{t+1}} 
        \\+ \sum_{\ell=1}^{\T{t}} \infsubpos{\ell}{\T{t+1}}^\top \infsub{t-\T{t}+\ell} \infsubpos{\ell}{\T{t+1}},
    \end{multline}
    for $t\in\seq{T-1}$, and
    \begin{multline}
        \label{eq_UinfmatestU_later}
        \U{t} \infmatest{\sliwin{t}} \U{t}^\top = \infpriorpos{T}^\top (\Qinv{t-T} + \C{t-T+1}^\top \Rinv{t-T+1} \C{t-T+1}) \infpriorpos{T}
        \\ + \sum_{\ell=1}^{T-1} \infsubpos{\ell}{T}^\top \infsub{t-T+\ell+1} \infsubpos{\ell}{T},
    \end{multline}
    for $t\geq T$.
    
    We now seek to express $\U{t} \infmatapx{\sliwin{t}} \U{t}^\top$ in closed form.
    We begin by initializing $\infmatapxprior{\psliwin{t}}$ with $\infmatapxprior{\psliwin{1}} = \infmatapxprior{0:0}$.

    We begin by writing $\infmatapx{\sliwin{t}}$ using \eqref{eq_infmatapx} and \eqref{eq_obj_func_approx_quad} according to
    \begin{multline}
        \infmatapx{\sliwin{t}} = \Hbf{t}^\top \Wapx{t}^{-1} \Hbf{t}
        \\
        = \F{t}^\top \Qinv{t-1} \F{t} + \G{t}^\top \Rinv{t} \G{t} + \Pibf{t}^\top \infmatapxprior{\psliwin{t}} \Pibf{t}.
    \end{multline}

    Substituting in $\infsub{t}$ yields
    \begin{equation}
        \label{infmatapx_simple_sum}
        \infmatapx{\sliwin{t}} = \infsubpos{\T{t}}{\T{t}+1}^\top \infsub{t} \infsubpos{\T{t}}{\T{t}+1} + \Pibf{t}^\top \infmatapxprior{\psliwin{t}} \Pibf{t}.
    \end{equation}

    For $t\in\seq{T-1}$, we know \eqref{eq_def_U} reduces to $\U{t} = \eye{n\T{t+1}\times n\T{t+1}}$, meaning that \eqref{eq_infmatapxprior} reduces to
    \begin{equation}
        \label{infmatapxprior_early}
        \infmatapxprior{\psliwin{t+1}} = \infmatapx{\sliwin{t}} \quad \text{for } t\in\seq{T-1}.
    \end{equation}

    By combining \eqref{infmatapx_simple_sum} and \eqref{infmatapxprior_early} and the initial condition $\infmatapxprior{\psliwin{1}} = \infmatapxprior{0:0}$, we can express $\infmatapx{\sliwin{t}}$ for $t\in\seq{T}$ via
    \begin{equation}
        \label{eq_infmatapx_early}
        \infmatapx{\sliwin{t}} = \infpriorpos{\T{t}+1}^\top \infmatapxprior{0:0} \infpriorpos{\T{t}+1}
        + \sum_{\ell=1}^{\T{t}} \infsubpos{\ell}{\T{t}+1}^\top \infsub{t-\T{t}+\ell} \infsubpos{\ell}{\T{t}+1}.
    \end{equation}

    For $t\geq T$, we know \eqref{eq_def_U} reduces to $\U{t} = \begin{bmatrix}
        \zero{nT\times n} & \eye{nT\times nT}
    \end{bmatrix}$, meaning that \eqref{eq_infmatapxprior} reduces to
    \begin{equation}
        \label{infmatapxprior_later}
        \infmatapxprior{\psliwin{t+1}} = \begin{bmatrix}
            \zero{nT\times n} & \eye{nT\times nT}
        \end{bmatrix} \infmatapx{\sliwin{t}} \begin{bmatrix}
            \zero{n\times nT} \\ \eye{nT\times nT}
        \end{bmatrix}.
    \end{equation}

    By combining \eqref{infmatapx_simple_sum} and \eqref{infmatapxprior_later} and the initial condition
    $\infmatapx{\sliwin{T}} = \infpriorpos{T+1}^\top \infmatapxprior{0:0} \infpriorpos{T+1}
    + \sum_{\ell=1}^{T} \infsubpos{\ell}{T+1}^\top \infsub{t-T+\ell} \infsubpos{\ell}{T+1}$ from \eqref{eq_infmatapx_early},
    we can express $\infmatapx{\sliwin{t}}$ for $t > T$ via
    \begin{multline}
        \label{eq_infmatapx_later}
        \infmatapx{\sliwin{t}} = \infpriorpos{T+1}^\top (\Q{t-T-1}^{-1} + \C{t-T}^\top \Rinv{t-T} \C{t-T}) \infpriorpos{T+1} 
        \\
        + \sum_{\ell=1}^{T} \infsubpos{\ell}{T+1}^\top \infsub{t-T+\ell} \infsubpos{\ell}{T+1}.
    \end{multline}

    Using \eqref{eq_infmatapx_early}, we express $\U{t} \infmatapx{\sliwin{t}} \U{t}^\top$ for $t\in\seq{T-1}$ as
    \begin{multline}
        \label{eq_UinfmatapxU_early}
        \U{t} \infmatapx{\sliwin{t}} \U{t}^\top = \infpriorpos{\T{t+1}}^\top \infmatapxprior{0:0} \infpriorpos{\T{t+1}} 
        \\
        + \sum_{\ell=1}^{\T{t}} \infsubpos{\ell}{\T{t+1}}^\top \infsub{t-\T{t}+\ell} \infsubpos{\ell}{\T{t+1}}.
    \end{multline}

    \ap{We evaluate $\U{t} \infmatapx{\sliwin{t}} \U{t}^\top$ for $t\geq T$ by substituting in $\infmatapx{\sliwin{t}}$ for $t=T$ from \eqref{eq_infmatapx_early} and for $t>T$ from \eqref{eq_infmatapx_later}, both yielding
    \begin{multline}
        \label{eq_UinfmatapxU_later}
        \U{t} \infmatapx{\sliwin{t}} \U{t}^\top
        = \sum_{\ell=1}^{T-1} \infsubpos{\ell}{T}^\top \infsub{t-T+\ell+1} \infsubpos{\ell}{T}
        \\
        +\infpriorpos{T}^\top ( \Qinv{t-T} + \C{t-T+1}^\top \Rinv{t-T+1} \C{t-T+1} ) \infpriorpos{T}. 
    \end{multline}}


    Because we assume that $\infmatestprior{0:0} = \infmatapxprior{0:0}$, we can see that \eqref{eq_UinfmatestU_early} and \eqref{eq_UinfmatestU_later} are identical to \eqref{eq_UinfmatapxU_early} and \eqref{eq_UinfmatapxU_later}.
    As such, we have $\U{t} \infmatest{\sliwin{t}} \U{t}^\top = \U{t} \infmatapx{\sliwin{t}} \U{t}^\top$ for all $t\geq 1$.
\end{proof}

\subsection{Proof of Lemma~\ref{lemma_infmatapx_closed_form}}
\begin{lemma}
    \label{lemma_infmatapx_closed_form}
    Let $\infmatapxprior{\psliwin{t}}$ be initialized with $\infmatapxprior{\psliwin{1}} = \infmatapxprior{0:0}$ and \ap{suppose the} update \ap{laws in} \eqref{eq_infmatapxprior} and \eqref{eq_infmatapx} \ap{are used}.
    Then, the $\superth{i}\superth{j}$ $n\times n$ submatrix of $\infmatapx{\sliwin{t}}$, denoted $\infmatapx{\sliwin{t}}^{\{i\}\{j\}}\in\R{n\times n}$, is defined for all $t \geq 1$ as
    \begin{align}
        & \infmatapx{\sliwin{t}}^{\{i\}\{j\}}
        \label{eq_infmatapprox_closed_form}
        = \begin{cases}
            \infmatapxprior{0:0} + \A{0}^\top \Qinv{0} \A{0}
            & \text{if } i=j=1, t\in\seq{T} \\
            \Qinv{t-1} + \C{t}^\top \Rinv{t} \C{t}
            & \text{if } i=j=\T{t}+1 \\
            \infmatapxcom{q}
            & \text{else if } i=j \\
            -\A{q}^\top \Qinv{q}
            & \text{if } i=j-1 \\
            -\Qinv{q-1} \A{q-1}
            & \text{if } i=j+1 \\
            \zero{n\times n}
            & \text{otherwise}
        \end{cases}
    \end{align}
    for all $i,j \in \{1,\dots, \T{t}+1\}$,
    where $q = t-\T{t}-1+i$ and
    \begin{equation}
        \infmatapxcom{q} = \Qinv{q-1} + \C{q}^\top \Rinv{q} \C{q} + \A{q}^\top \Qinv{q} \A{q}.
    \end{equation}
\end{lemma}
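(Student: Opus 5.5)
The plan is to read the blocks directly off the closed forms already derived in the proof of Lemma~\ref{lemma_infmatapxprior_equivalent}. These are \eqref{eq_infmatapx_early} for $t\in\seq{T}$ and \eqref{eq_infmatapx_later} for $t>T$. Each expresses $\infmatapx{\sliwin{t}}$ as a sum of two kinds of terms. The first is a single ``anchor'' term placed on the $(1,1)$ block through $\infpriorpos{\T{t}+1}$. The second is a banded sum $\sum_{\ell=1}^{\T{t}} \infsubpos{\ell}{\T{t}+1}^\top \infsub{t-\T{t}+\ell}\infsubpos{\ell}{\T{t}+1}$. I would first record the elementary fact that $\infsubpos{\ell}{b}^\top \infsub{} \infsubpos{\ell}{b}$ places the $2n\times 2n$ matrix $\infsub{}$ on block rows and columns $\ell,\ell+1$ and is zero elsewhere. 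Consequently the only nonzero $n\times n$ blocks of $\infmatapx{\sliwin{t}}$ are diagonal or first off-diagonal, which gives the ``otherwise'' zero case immediately.

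Next I would compute the off-diagonal blocks. For $i=j-1$, the block $(i,i+1)$ receives a contribution only from $\ell=i$, namely the upper-right block of $\infsub{t-\T{t}+i}$, which is $-\A{q}^\top\Qinv{q}$ with $q=t-\T{t}+i-1$, matching the stated $q$. The case $i=j+1$ is the transpose. It comes from the same $\ell=j=i-1$ and gives $-\Qinv{t-\T{t}+i-2}\A{t-\T{t}+i-2}$, which equals $-\Qinv{q-1}\A{q-1}$. Checking this index shift is routine.

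For the diagonal blocks, block $(i,i)$ receives the lower-right block of $\infsub{t-\T{t}+i-1}$ from $\ell=i-1$ (when $i\ge2$), the upper-left block of $\infsub{t-\T{t}+i}$ from $\ell=i$ (when $i\le \T{t}$), and the anchor when $i=1$. For interior $i$ the sum is $\Qinv{q-1}+\C{q}^\top\Rinv{q}\C{q}+\A{q}^\top\Qinv{q}\A{q}=\infmatapxcom{q}$. For $i=\T{t}+1$ only $\ell=\T{t}$ contributes, giving $\Qinv{t-1}+\C{t}^\top\Rinv{t}\C{t}$. For $i=1$ the result depends on $t$:
\begin{itemize}
\item If $t\le T$, then $q=0$ and the anchor is $\infmatapxprior{0:0}$, so the block is $\infmatapxprior{0:0}+\A{0}^\top\Qinv{0}\A{0}$.
\item If $t>T$, the anchor in \eqref{eq_infmatapx_later} is $\Qinv{t-T-1}+\C{t-T}^\top\Rinv{t-T}\C{t-T}$ with $q=t-T$. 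Adding $\A{q}^\top\Qinv{q}\A{q}$ then gives exactly $\infmatapxcom{q}$, so it falls under the ``else if $i=j$'' case.
\end{itemize}

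The main obstacle is ensuring \eqref{eq_infmatapx_later} really holds for every $t>T$ and not only at $t=T+1$. I would establish this by induction on $t$. Assuming the form at $t\ge T$ (the base case being \eqref{eq_infmatapx_early} at $t=T$), apply \eqref{infmatapxprior_later}. This deletes block row and column $1$, so the $\ell=1$ term of the banded sum loses its upper-left part. Its lower-right part $\Qinv{t-T}+\C{t-T+1}^\top\Rinv{t-T+1}\C{t-T+1}$ survives as the new anchor, and the remaining terms shift down by one index. This is precisely \eqref{eq_UinfmatapxU_later}. Substituting into \eqref{infmatapx_simple_sum} at $t+1$ appends $\infsub{t+1}$ at position $\T{}$ and reproduces \eqref{eq_infmatapx_later} at $t+1$. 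Beyond this, I would take care with the edge case $\T{t}=1$ when $t=1$, where the rows $i=1$ and $i=\T{t}+1$ are the only ones and no interior diagonal case arises.
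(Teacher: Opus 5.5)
Your proposal is correct and takes the same route as the paper, whose proof likewise just reads the blocks off the closed forms \eqref{eq_infmatapx_early} and \eqref{eq_infmatapx_later}. You add the explicit block-by-block index checks and an induction showing that \eqref{eq_infmatapx_later} holds for every $t>T$; the paper asserts that form without carrying out this step.
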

\label{subsect_lemma_infmatapx_closed_form}
\begin{proof}

    \ap{In Appendix~\ref{subsect_lemma_infmatapxprior_equivalent}, we presented $\infmatapx{\sliwin{t}}$ in closed form for $t\in\seq{T}$ in \eqref{eq_infmatapx_early} and for $t>T$ in \eqref{eq_infmatapx_later}.
    By considering both of these closed form calculations, }
    we can express the $\superth{i}\superth{j}$ $n\times n$ submatrix of $\infmatapx{\sliwin{t}}$, 
    denoted $\infmatapx{\sliwin{t}}^{\{i\}\{j\}}\in\R{n\times n}$, for all $t \geq 1$ as
    \begin{align}
        & \infmatapx{\sliwin{t}}^{\{i\}\{j\}}
        = \begin{cases}
            \infmatapxprior{0:0} + \A{0}^\top \Qinv{0} \A{0}
            & \text{if } i=j=1, t\in\seq{T} \\
            \Qinv{t-1} + \C{t}^\top \Rinv{t} \C{t}
            & \text{if } i=j=\T{t}+1 \\
            \infmatapxcom{q}
            & \text{else if } i=j \\
            -\A{q}^\top \Qinv{q}
            & \text{if } i=j-1 \\
            -\Qinv{q-1} \A{q-1}
            & \text{if } i=j+1 \\
            \zero{n\times n}
            & \text{otherwise}
        \end{cases}
    \end{align}
    for all $i,j \in \{1,\dots, \T{t}+1\}$,
    where $q = t-\T{t}-1+i$ and
    \begin{equation}
        \infmatapxcom{q} = \Qinv{q-1} + \C{q}^\top \Rinv{q} \C{q} + \A{q}^\top \Qinv{q} \A{q}.
    \end{equation}    
\end{proof}

\subsection{Proof of Lemma~\ref{lemma_depsetapx}}
\label{subsect_lemma_depsetapx}
\begin{proof}
    \ap{Using the definition of $\priorpropmat{t}$ from \eqref{eq_alg_prior_propagate} and the definition of $\depsetapxtir{t}{i}{\A{}}$ from \eqref{eq_depsets_asynch}, it follows that
    \begin{equation}
        \depsetapxtir{t}{i}{\A{}} = i \cup \depop{i}{\A{t-1}}.
    \end{equation}}

    \ap{Using the definition of $\coeffmaty{t}$ from \eqref{eq_coeffmaty} and the definition of $\depsetapxtir{t}{i}{\y{}}$ from \eqref{eq_depsets_asynch}, it follows that
    \begin{equation}
        \depsetapxtir{t}{i}{\y{}} = \depop{i}{\C{t}^\top \Rinv{t}}.
    \end{equation}}

    \ap{Using the definition of $\depsetapxtir{t}{i}{\xapx{}}$ from \eqref{eq_depsets_asynch} with \eqref{eq_infmatapx} yields $\depsetapxtir{t}{i}{\xapx{}} = \depop{i}{\infmatapx{\sliwin{t}}}$.
    Combining this with the closed form representation of $\infmatapx{\sliwin{t}}$ from \eqref{eq_infmatapprox_closed_form} yields
    \begin{equation}
        \depsetapxtir{t}{i}{\xapx{}} = \depsetapxZti{t}{i} \cup \bigcup_{q=t-\T{t}}^{t-1} \depsetapxtiq{t}{i}{q},
    \end{equation}
    \ap{where $\depsetapxZti{t}{i}$ and $\depsetapxtiq{t}{i}{q}$ are from \eqref{eq_def_depsetapxZti} and \eqref{eq_def_depsetapxtiq} respectively.}}

    \ap{Using the definition of $\coeffmatxapxprior{t}$ from \eqref{eq_coeffmatxapxprior} and the definition of $\depsetapxtir{t}{i}{\xapxprior{}}$ from \eqref{eq_depsets_asynch} yields $\depsetapxtir{t}{i}{\xapxprior{}} = \depop{i}{\infmatapxprior{\psliwin{t}}}$.
    By combining the initial condition $\infmatapxprior{\psliwin{1}} = \infmatapxprior{0:0}$ with \eqref{eq_infmatapprox_closed_form} and \eqref{eq_infmatapxprior}, we can compute the $\superth{i}\superth{j}$ $n\times n$ submatrix of $\infmatapxprior{\psliwin{t}}$, denoted $\infmatapxprior{\psliwin{t}}^{\{i\}\{j\}}\in\R{n\times n}$, as
    \begin{align}
        & \infmatapxprior{\psliwin{t}}^{\{i\}\{j\}}
        \label{eq_infmatapproxprior_closed_form}
        \\
        &= \begin{cases}
            \infmatapxprior{0:0}
            & \text{if } i=j=1, t = 1 \\
            \infmatapxprior{0:0} + \A{0}^\top \Qinv{0} \A{0}
            & \text{if } i=j=1, t\in\{2,\dots,T\} \\
            \Qinv{t-2} + \C{t-1}^\top \Rinv{t-1} \C{t-1}\hspace{-5pt}\null
            & \text{if } i=j=\T{t} \\
            \infmatapxcom{q}
            & \text{else if } i=j \\
            -\A{q}^\top \Qinv{q}
            & \text{if } i=j-1 \\
            -\Qinv{q-1} \A{q-1}
            & \text{if } i=j+1 \\
            \zero{n\times n}
            & \text{otherwise}
        \end{cases}
    \end{align}
    for all $i,j \in \{1,\dots, \T{t}\}$,
    where $q = t-\T{t}-1+i$ and
    \begin{equation}
        \infmatapxcom{q} = \Qinv{q-1} + \C{q}^\top \Rinv{q} \C{q} + \A{q}^\top \Qinv{q} \A{q}.
    \end{equation}
    As such, we can express $\depsetapxtir{t}{i}{\xapxprior{}}$ as
    \begin{equation}
        \depsetapxtir{t}{i}{\xapxprior{}} = \depsetapxZti{t}{i} \cup \bigcup_{q=t-\T{t}}^{t-2} \depsetapxtiq{t}{i}{q},
    \end{equation}
    \ap{where $\depsetapxZti{t}{i}$ and $\depsetapxtiq{t}{i}{q}$ are from \eqref{eq_def_depsetapxZti} and \eqref{eq_def_depsetapxtiq} respectively.}}
\end{proof}

\subsection{Proof of Lemma~\ref{lemma_error_bound}}
\label{subsect_lemma_error_bound}
\begin{lemma}
    \label{lemma_error_bound}
    Let $\Japx$ be defined as in \eqref{eq_obj_func_approx_quad} and $\x{\sliwin{t}}^*$ be the unique minimizer of $\Japx$.
    For any $\x{\sliwin{t}}\in\R{n(\T{t}+1)}$, the error bound condition holds, i.e.,
    \begin{equation}
        \norm{\x{\sliwin{t}} - \x{\sliwin{t}}^*} \leq \kappa \norm{\nabla \Japx(\x{\sliwin{t}})},
    \end{equation}
    where $\kappa = \frac{1}{\lambda_{\min}(\Kapx{t})} > 0$.
\end{lemma}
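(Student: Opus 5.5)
The plan is to exploit the quadratic structure of $\Japx$ from \eqref{eq_obj_func_approx_quad}, whose gradient is the affine map $\nabla \Japx(\x{\sliwin{t}}) = \Kapx{t}\x{\sliwin{t}} + \fapx{t}$. By Lemma~\ref{lemma_asynch_analytical_solution}, $\Kapx{t}\succ 0$ and the unique minimizer is $\x{\sliwin{t}}^* = -\Kapx{t}^{-1}\fapx{t}$, so $\fapx{t} = -\Kapx{t}\x{\sliwin{t}}^*$. Substituting gives the identity
\begin{equation}
    \nabla \Japx(\x{\sliwin{t}}) = \Kapx{t}\left(\x{\sliwin{t}} - \x{\sliwin{t}}^*\right),
\end{equation}
which reduces the claim to a lower bound on $\norm{\Kapx{t} v}$ for the vector $v = \x{\sliwin{t}} - \x{\sliwin{t}}^*$.

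Next, since $\Kapx{t}$ is symmetric positive definite, I will diagonalize it in an orthonormal eigenbasis. Writing $v$ in that basis shows $\norm{\Kapx{t} v}^2 = \sum_k \lambda_k^2 c_k^2 \geq \lambda_{\min}(\Kapx{t})^2 \norm{v}^2$. Equivalently, $\norm{v} = \norm{\Kapx{t}^{-1}\Kapx{t}v} \leq \norm{\Kapx{t}^{-1}}\,\norm{\Kapx{t}v}$, with $\norm{\Kapx{t}^{-1}} = 1/\lambda_{\min}(\Kapx{t})$. Either way this gives $\norm{\x{\sliwin{t}} - \x{\sliwin{t}}^*} \leq \kappa \norm{\nabla\Japx(\x{\sliwin{t}})}$ with $\kappa = 1/\lambda_{\min}(\Kapx{t})$.

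Finally, $\kappa>0$ is well defined because $\lambda_{\min}(\Kapx{t})>0$. This positivity is exactly the $\Kapx{t}\succ0$ conclusion of Lemma~\ref{lemma_asynch_analytical_solution}, which holds for every $t\geq1$ by the induction on $\infmatapxprior{\psliwin{t}}\succ 0$ in its proof. No step is a real obstacle; the only point needing care is confirming that this positive definiteness holds for all $t$, and the earlier lemma already supplies it.
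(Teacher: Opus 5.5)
Your proposal is correct and follows essentially the same route as the paper. The paper also writes $\x{\sliwin{t}} - \x{\sliwin{t}}^* = \Kapx{t}^{-1}\nabla \Japx(\x{\sliwin{t}})$ using $\x{\sliwin{t}}^* = -\Kapx{t}^{-1}\fapx{t}$, bounds it by submultiplicativity of the induced norm with $\norm{\Kapx{t}^{-1}} = 1/\lambda_{\min}(\Kapx{t})$, and takes $\Kapx{t}\succ 0$ from Lemma~\ref{lemma_asynch_analytical_solution}; your eigenbasis computation is just an equivalent way to justify that norm identity.
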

\begin{proof}
    As shown in Lemma~\ref{lemma_asynch_analytical_solution},
    the objective function $\Japx$ in \eqref{eq_obj_func_approx_quad} has a unique solution given by 
    $\x{\sliwin{t}}^* = -\Kapx{t}^{-1}\fapx{t}$.
    Consider the distance of $\x{\sliwin{t}}^*$ from some $\x{\sliwin{t}}\in\R{n(\T{t}+1)}$, given by $\norm{\x{\sliwin{t}} - \x{\sliwin{t}}^*}$.
    Pre-multiplying $\x{\sliwin{t}}$ with $\Kapx{t}^{-1}\Kapx{t}$ and substituting $\x{\sliwin{t}}^*$ with $-\Kapx{t}^{-1}\fapx{t}$ yields
    \begin{equation}
        \norm{\x{\sliwin{t}} - \x{\sliwin{t}}^*} = \norm{\Kapx{t}^{-1}\Kapx{t}\x{\sliwin{t}} -\Kapx{t}^{-1}\fapx{t}}.
    \end{equation}
    We factor the inner term to yield
    \begin{equation}
        \norm{\x{\sliwin{t}} - \x{\sliwin{t}}^*} = \norm{\Kapx{t}^{-1}(\Kapx{t}\x{\sliwin{t}} + \fapx{t})}
    \end{equation}
    We use a property of the matrix norm to yield,
    \begin{equation}
        \norm{\x{\sliwin{t}} - \x{\sliwin{t}}^*} \leq \norm{\Kapx{t}^{-1}}\norm{\Kapx{t}\x{\sliwin{t}} + \fapx{t}}.
    \end{equation}
    We define $\kappa = \norm{\Kapx{t}^{-1}} = \frac{1}{\lambda_{\min}(\Kapx{t})}$ and recall that $\nabla \Japx(\x{\sliwin{t}}) = \Kapx{t}\x{\sliwin{t}} + \fapx{t}$.
    As such, making these substitutions yields the desired inequality,
    \begin{equation}
        \norm{\x{\sliwin{t}} - \x{\sliwin{t}}^*} \leq \kappa \norm{\nabla \Japx(\x{\sliwin{t}})}.
    \end{equation}
\end{proof}


\subsection{Proof of Lemma~\ref{lemma_lipschitz_continuous}}
\label{subsect_lemma_lipschitz_continuous}
\begin{lemma}
    \label{lemma_lipschitz_continuous}
    The gradient of the function $\Japx$ in \eqref{eq_obj_func_approx_quad} is $L$-Lipschitz continuous on any compact interval, with $L=\lambda_{\max}(\Kapx{t})$.
\end{lemma}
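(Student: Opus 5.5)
We will work directly from the quadratic form of $\Japx$ in \eqref{eq_obj_func_approx_quad}. As in the proof of Lemma~\ref{lemma_asynch_analytical_solution}, its gradient is the affine map $\nabla \Japx(\x{\sliwin{t}}) = \Kapx{t}\x{\sliwin{t}} + \fapx{t}$. The proof then reduces to a one-line operator-norm estimate plus an identification of that norm with the largest eigenvalue.

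\emph{Step 1: reduce to a matrix norm.} Take any two points $\x{}$ and $\x{}'$ in $\R{n(\T{t}+1)}$. Subtracting the gradients cancels the affine term $\fapx{t}$, so
\begin{equation}
\norm{\nabla \Japx(\x{}) - \nabla \Japx(\x{}')} = \norm{\Kapx{t}(\x{} - \x{}')} \leq \norm{\Kapx{t}}\,\norm{\x{} - \x{}'},
\end{equation}
where $\norm{\Kapx{t}}$ is the induced Euclidean (spectral) norm.

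\emph{Step 2: identify the spectral norm with $\lambda_{\max}(\Kapx{t})$.} This is the only step needing justification. We will use two facts:
\begin{itemize}
\item $\Kapx{t} = 2\Hbf{t}^\top \Wapx{t}^{-1}\Hbf{t}$ is symmetric, so its spectral norm is its largest absolute eigenvalue;
\item Lemma~\ref{lemma_asynch_analytical_solution} gives $\Kapx{t}\succ 0$, so every eigenvalue is positive.
\end{itemize}
Together these give $\norm{\Kapx{t}} = \lambda_{\max}(\Kapx{t})$, hence $L = \lambda_{\max}(\Kapx{t})$. The same positivity also gives $L>0$.

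The estimate in Step 1 holds on all of $\R{n(\T{t}+1)}$, so it holds in particular on any compact set, which covers the statement's wording about compact intervals. We expect no real obstacle: the only point requiring care is invoking symmetry and positive definiteness from Lemma~\ref{lemma_asynch_analytical_solution}, which is why $L$ is stated as $\lambda_{\max}(\Kapx{t})$ rather than a general singular value.
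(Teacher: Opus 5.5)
Your proof is correct and follows the same route as the paper: cancel $\fapx{t}$ in the gradient difference, bound by the induced norm of $\Kapx{t}$, and identify that norm with $\lambda_{\max}(\Kapx{t})$, using positive definiteness from Lemma~\ref{lemma_asynch_analytical_solution} to get $L>0$. You also state explicitly that $\Kapx{t}$ is symmetric, which is needed for $\norm{\Kapx{t}} = \lambda_{\max}(\Kapx{t})$ and which the paper asserts without comment.
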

\begin{proof}
    By definition of the gradient of \eqref{eq_obj_func_approx_quad}, we know
    \begin{equation}
        \norm{\nabla \Japx(\x{}) - \nabla \Japx(\y{})}
        = \norm{\Kapx{t}\x{} + \fapx{t} - \Kapx{t}\y{} - \fapx{t}}.
    \end{equation}
    
    This can be simplified and factored to
    \begin{equation}
        \norm{\nabla \Japx(\x{}) - \nabla \Japx(\y{})}
        = \norm{\Kapx{t}(\x{} - \y{})}.
    \end{equation}

    Using a property of the matrix norm, we can split the norm on the right, according to
    \begin{equation}
        \norm{\nabla \Japx(\x{}) - \nabla \Japx(\y{})}
        \leq \norm{\Kapx{t}}\norm{(\x{} - \y{})}.
    \end{equation}

    We define $L=\norm{\K{t}}=\lambda_{\max}(\Kapx{t})$.
    Substituting this into the expression yields
    \begin{equation}
        \norm{\nabla \Japx(\x{}) - \nabla \Japx(\y{})}
        \leq L\norm{\x{} - \y{}}.
    \end{equation}

    Because $\Kapx{t}$ is positive definite, then $L=\lambda_{\max}(\Kapx{t})>0$.
\end{proof}

\subsection{Proof of Theorem~\ref{theorem_convergence_rate}}
\label{subsect_theorem_convergence_rate}
\begin{proof}
    In order to use Proposition 2.2 of \cite{tseng1991rate}, we must first prove that
    minimizing $\Japx$ using Algorithm~\ref{alg_bcd_asynch}
    satisfies specific assumptions.
    
    First, we recall that our objective function, given by \eqref{eq_obj_func_approx_quad}, is an unconstrained quadratic function with a positive definite Hessian and a global minimizer 
    $\x{\sliwin{t}}^* = -\Kapx{t}^{-1}\fapx{t}$, as proven in \ap{Appendix~\ref{subsect_lemma_asynch_analytical_solution}.}
    This proves that $\Japx$ is lower bounded and has a nonempty solution set.
    Additionally, $\Japx$ is $L$-Lipschitz continuous with $L=\lambda_{\max}(\Kapx{t})$ as proven in Lemma~\ref{lemma_lipschitz_continuous}.

    Additionally, while \cite{tseng1991rate} only requires the error bound hold locally for small enough $\Japx(\xtotali{\sliwin{t}}{k})$ and $\norm{\nabla \Japx(\xtotali{\sliwin{t}}{k})}$, we proved in Lemma~\ref{lemma_error_bound} that the error bound condition holds for $\Japx$ globally.

    We need $rB \geq \hat{k}$ such that the error bound condition holds and such that the closest minimizer of $\Japx$ to $\xtotali{\sliwin{t}}{k}$ does not change.
    However, we note that the error bound condition holds for all $\xtotali{\sliwin{t}}{k}$ and that the minimizer of $\Japx$ is always $\x{\sliwin{t}}^* = -\Kapx{t}^{-1}\fapx{t}$.
    As such, $\hat{k} = 0$.
    Additionally, our partial asynchrony assumptions satisfy those of \cite{tseng1991rate}.
    As such, the desired result follows by Proposition 2.2 of \cite{tseng1991rate}. 
\end{proof}

\end{document}